\documentclass{article}


\usepackage{amsmath,amsfonts,amsthm,amssymb,bm,bbm}









\def\eqref#1{equation~\ref{#1}}









\def\1{\bm{1}}










\DeclareMathAlphabet{\mathsfit}{\encodingdefault}{\sfdefault}{m}{sl}
\SetMathAlphabet{\mathsfit}{bold}{\encodingdefault}{\sfdefault}{bx}{n}


\def\gF{{\mathcal{F}}}
\def\gG{{\mathcal{G}}}

\def\gL{{\mathcal{L}}}

\def\gS{{\mathcal{S}}}
\def\gT{{\mathcal{T}}}

\def\gV{{\mathcal{V}}}



\def\sN{{\mathbb{N}}}

\def\sZ{{\mathbb{Z}}}










\newcommand{\Sp}[1]{\left(#1\right)}
\newcommand{\Mp}[1]{\left[#1\right]}
\newcommand{\Bp}[1]{\left\{#1\right\}}
\newcommand{\Parity}{\mathsf{PARITY}}
\newcommand{\calD}{\mathcal{D}}
\newcommand{\calDpar}{\calD^{\sf\oplus}}
\newcommand{\AC}{\mathsf{AC}}

\newtheorem{theorem}{Theorem}[section]

\newtheorem{definition}{Definition}
\newtheorem{corollary}[theorem]{Corollary}

\theoremstyle{remark}

\newcommand{\m}{\text{M}}

\usepackage{hyperref}
\usepackage{url}
\usepackage[nameinlink,noabbrev]{cleveref}
\usepackage{natbib}

\usepackage{thm-restate}
\usepackage{algorithm}
\usepackage{algorithmic}

\usepackage{fullpage}
\usepackage{setspace}

\title{Diffusion Language Models are Provably Optimal Parallel Samplers}

\author{
    Haozhe Jiang\\
    \texttt{ericjiang@berkeley.edu}\\
    \and
    Nika Haghtalab\\
    \texttt{nika@berkeley.edu}\\
    \and
    Lijie Chen\\
    \texttt{lijiechen@berkeley.edu}
}

\usepackage[suppress]{color-edits}
\addauthor{nh}{magenta}
\addauthor{hj}{blue}

\begin{document}
\maketitle

\begin{abstract}
Diffusion language models (DLMs) have emerged as a promising alternative to autoregressive models for faster inference via parallel token generation. We provide a rigorous foundation for this advantage by formalizing a model of parallel sampling and showing that DLMs augmented with polynomial-length chain-of-thought (CoT) can simulate any parallel sampling algorithm using an optimal number of sequential steps. 
Consequently, whenever a target distribution 
can be generated using a small number of sequential steps, a DLM can be used to generate the distribution using the same number of optimal sequential steps.
However, without the ability to modify previously revealed tokens, DLMs with CoT can still incur large intermediate footprints. We prove that enabling remasking (converting unmasked tokens to masks)
or revision (converting unmasked tokens to other unmasked tokens) together with CoT further allows DLMs to simulate any parallel sampling algorithm with optimal space complexity.
We further justify the advantage of revision by establishing a strict expressivity gap: DLMs with revision or remasking are strictly more expressive than those without.
Our results not only provide a theoretical justification for the promise of DLMs as the most efficient parallel sampler, but also advocate for enabling revision in DLMs.
\end{abstract}
\section{Introduction}
Diffusion language models (DLMs, \cite{austin2021structured,nie2025large,song2025seed}) have recently emerged as a promising alternative to autoregressive (AR) generation, fueled by their potential to deliver faster generation and more efficient inference. 
In contrast to AR models---which, despite running on massively parallel hardware, must still generate text sequentially, token by token---DLMs decode by iteratively denoising masked sequences, allowing many positions to be unmasked in parallel. 
This fundamental architectural difference offers the promise of significantly reducing generation and inference latency, and has led to a surge of interest in DLMs in a world increasingly attuned to the value of inference-time scaling~\cite{song2025seed,khanna2025mercury}.

Yet DLMs remain in their infancy compared to highly optimized AR models, and it is far from clear when, or if, their promised advantages will translate into principled gains.
This makes theory especially important for evaluating the genuine potential of 
DLMs: unlike AR models, whose capabilities under chain-of-thought (CoT;~\cite{wei2022chain}) reasoning are well understood, the basic limits of DLMs, such as \emph{how much sequential computation they truly save, and what space and memory resources such parallelism requires}, remain largely unexplored.

In this work, we provide such a theoretical foundation for studying DLMs by formalizing them through the lens of circuit complexity, using circuit \emph{depth} and \emph{width} as abstractions for the time and space resources required to compute a function in parallel.
As we describe in more detail below, our results establish diffusion language models as among the most efficient parallel samplers.

Our first result (\Cref{thm:cot}) shows that when equipped with sufficiently long CoT, DLMs can simulate any sampling procedure with the minimal number of sequential computational steps. 
Specifically, any distribution that can be realized by a circuit of depth $d$ can be generated by a DLM in $d$ decoding rounds, thereby matching the minimum number of sequential steps required by the target computation. 
This stands in stark contrast to AR decoding. In particular, while it is known that autoregressive models equipped with CoT can simulate any Boolean circuit~\citep{li2024chain}, the number of sequential steps in this case (which in the case of AR models equals the length of the CoT) grows linearly with circuit \emph{size}, i.e., the total number of nodes, rather than with depth.

Beyond the number of steps required to generate a sample, we also analyze the working memory footprint (\Cref{thm:cotr}) of DLMs and show that it depends critically on design choices unique to this family of models---most notably on inference-time mechanisms such as \emph{remasking}~\citep{nie2025large} and \emph{revision}~\cite{song2025seed}. 
Remasking is an inference-time mechanism that allows already unmasked tokens to be re-masked (noised) and resampled (denoised). 
Revision, on the other hand, permits unmasked tokens to be changed directly to other tokens.

Finally, we demonstrate via the problem of uniformly sampling from strings with zero parity that remasking or revision can generate strictly richer distributions. More specifically, a DLM with remasking or revision can generate the distribution with constant steps (\Cref{thm:remasking-ac0,thm:revision-ac0}), while it is impossible for a DLM with neither of them (\Cref{thm:impossible}). Our results show clear evidence that remasking and revision, which arise from the special structure of DLMs, can potentially greatly enhance the capability of the model given constrained inference time. This suggests exploring more remasking strategies and more flexible forward processes in the future.

\subsection{Related Work}
\paragraph{Diffusion Language Model.} Diffusion models were originally built for generative modeling in the continuous domain~\citep{ho2020denoising,sohl2015deep}, and were later extended to the discrete domain. The seminal work of D3PM~\citep{austin2021structured} formalized categorical forward kernels and popularized absorbing masks, and is followed by a series of works on language modeling~\citep{he2023diffusionbert,lou2024discrete,sahoo2024simple}. Furthermore, recent works~\citep{song2025seed,nie2025large,yang2025mmada,ye2025dream,xie2025dreamcoder7bopendiffusion,khanna2025mercury} scale DLMs up, and achieve competitive performance to AR models and faster generation.

\paragraph{Decoding order, Remasking and Revision.} DLMs can perform inference in any order, but it is observed that decoding order can greatly affect the performance of DLMs~\citep{kim2025train,yang2025mmada,xie2025dreamcoder7bopendiffusion}. Remasking~\citet{nie2025large,yang2025mmada} is another inference-time strategy that introduces richer decoding paths to further enhance performance. More recently, DLMs with revision~\citep{song2025seed} have been introduced and exhibit remarkable capabilities. However, the fundamental differences of these strategies are not clear yet.

\section{Preliminaries}\label{sec:prelim}
In this section, we specify the notations and conventions used in this paper, and describe the theoretical framework to analyze the problems.

\paragraph{Notations.} Let $\gV$ be the vocabulary set. Throughout the paper, we will always consider the binary vocabulary set $\gV = \{a,b\}$ by default, but the diffusion language model works for any other discrete vocabulary set. We also consider a \emph{mask token} $\m$. We consider sequences of length $L$ on the   $\gV\cup\{\m\}$.
For a sequence $x_t \in (\gV\cup\{\m\})^L$, $t\in[0,1]$ denotes the noise level, i.e., the fraction of the masked tokens in the sequence.
For simplicity, we assume $t \in \{0,1/L,2/L,\dotsc,(L-1)/L,1\}$, and therefore $t \cdot L$ is an integer. This notation is standard in literature, and we use it by default. We also use $x$ to denote a sequence of tokens in general throughout this paper. Let $[n]=\Bp{1,\dots,n}$ for a positive integer $n$. A sequence with no mask is called noiseless. We use $x_t^i$ where $i\in[L]$ to denote the $i$-th entry of $x_t$, and $x_t^{i:j}$ to be the subsequence of $x_t$ starting from the $i$-th entry to the $j$-th entry, including both ends. Since subscripts are used as noise levels, we use superscripts for indices of sequences by default. We denote the set of distributions with support on set $S$ by $\Delta(S)$.

\subsection{Diffusion Language Model}
Central to a diffusion language model (DLM) is a predictor $p(\cdot|x_t)\in\Delta\Sp{\gV^L}$ that predicts a noiseless sequence from a noisy sequence $x_t\in\Sp{\gV\cup\{\m\}}^L$. Let $p^i(\cdot|x_t)$ be the distribution on the token at position $i$. The predictor satisfies that its predictions on tokens at different positions are conditionally independent given $x_t$, and it only changes mask tokens. Mathematically, for all $x\in\gV^L,x_t\in\Sp{\gV\cup\Bp{\m}}^L$, we have
\begin{align*}
    p(x|x_t)=\prod_{i=1}^Lp^i\Sp{x^i\middle|x_t}, \text{ and }p^i\Sp{x^i=x_t^i\middle|x_t}=1\text{ if $x_t^i\in\gV$.}
\end{align*}
These constraints arise from the definition of the forward process, where tokens at different positions are independently masked to $\m$.

At inference time, we specify  
a noise level $s$ that we want the output sequence to be and unmask a set of positions to match this noise level. 
Following the convention of \cite{kim2025train}, we denote the set of positions to be unmasked by $\gS=\gF(x_t)\subseteq\Bp{i|x_t^i=\m}$ that depend on $x_t$. A simple unmasking policy $\gF(x_t)$ only requires the knowledge of where the masked tokens are and chooses a uniformly random set $\gS\subseteq\Bp{i|x_t^i=\m}$ to unmask, such that $|\gS|=L(t-s)$.
For much of this paper, we constrain $\gF$ to be a deterministic function.

We also consider the general case of DLMs trained to generate an output $o\in\gV^m$ conditioned on an input $q\in\gV^n$. At inference time, we start from $x_t=q\m^{L-n}$, the sequence of input padded with masks, and adopt an iterative generation process.
In each iteration, we first calculate the positions to decode as $\gS=\gF(x_t)$, and then decode the corresponding positions using $p^i(\cdot|x_t)$ for $i\in\gS$. The iteration ends at a predetermined number of steps $D$ at a noiseless sequence $x_0$. We finally extract the output as $o=x^{L-m+1:L}_0$. The generation process is summarized in~\Cref{alg:dlm}.

Chain of Thought (CoT) is the technique in language models that allows the output $o$ to a prompt $q$ to appear after generating some intermediate tokens. In our framework, this means that $L>m+n$, and the form of $x_0$ is $q|\text{CoT}|o$. In this paper we always assume that $L$ matches the length of $q|\text{CoT}|o$ for simplicity, while in practice $L$ is predetermined and is usually larger than the length of $q|\text{CoT}|o$. We also allow for the case when $q$ is an empty string, which corresponds to generating an unconditional distribution.

\paragraph{Remasking.} Remasking is an inference strategy that is used to improve output quality. It allows unmasked tokens to be remasked during the generation process. We model this via function $\gG(\cdot)$, which at the end of each iteration, produces a set $\gT=\gG(x_t)\subseteq[L]$ that represents the positions to be remasked and set to $\m$.
By default, we consider $\gG$, which is a random function. The remasking process is also incorporated in~\Cref{alg:dlm}.

\paragraph{Revision.}
Recent works such as ~\cite{song2025seed} have proposed forward processes that allow a token to change into another token before becoming fully masked. We refer to this as DLMs with \emph{revision}. When DLMs with revision are considered, we relax the constraint that an unmasked token must remain fixed. However, we still impose the constraint that token distributions across different positions are independent. Formally, for all $x\in\gV^L,x_t\in\Sp{\gV\cup\Bp{\m}}^L$, we have
\begin{align*}
    p(x|x_t)=\prod_{i=1}^Lp^i\Sp{x^i\middle|x_t}, \text{but it is possible that }p^i\Sp{x^i=x_t^i\middle|x_t}<1\text{ for $x_t^i\in\gV$.}
\end{align*}
Unless otherwise specified, throughout this paper DLM refers to the standard model without remasking or revision.

\begin{algorithm}
\caption{DLM Inference}
\begin{algorithmic}\label{alg:dlm}
\REQUIRE prompt $q\in\gV^n$, a DLM including: predictor $p$, length $L$, rounds $D$, unmasking position policy $\gF$, optional remasking position policy $\gG$.
\ENSURE Output $o\in\gV^m$
\STATE $x\gets q\m^{L-n}$
\FOR{$j = 1$ to $D-1$}
    \STATE $\gS\gets\gF(x)$.
    \STATE $x^i\sim p^i\Sp{\cdot|x}$ for each $i\in\gS$.
    \STATE $\gT\gets\gG(x)$. \hfill // If remasking is enabled
    \STATE $x^i\gets\m$ for each $i\in\gT$. \hfill // If remasking is enabled
\ENDFOR
\STATE $x^i\sim p^i\Sp{\cdot|x}$ for each remaining masked position.
\RETURN $x^{L-m+1:L}$
\end{algorithmic}
\end{algorithm}

\subsection{Boolean Circuits}\label{sec:bool}

A Boolean circuit is a model that characterises the complexity of carrying out certain parallel computations. A circuit $C$ is defined as a layered directed acyclic graph (DAG). Formally, a DAG $C=(V,E)$ consists of a set of vertices $V$ and a set of directed edges $E\subseteq V\times V$ with no directed cycles. The number of vertices $|V|$ is called the size of the circuit. Furthermore, there is a function $l:V\to\sZ^+$, which assigns every vertex to a layer satisfying $l(v_2)=l(v_1)+1$ for all $(v_1,v_2)\in E$. That is, no edge skips a layer. Each vertex represents an \emph{input}, a \emph{logic gate}, or an \emph{output}. The input and output vertices have zero in-degree and out-degree, respectively, and they take Boolean values from $\{0,1\}$.
Logic gates perform a range of operations on their inputs, which we will specify in the following paragraphs.
As a result, we can calculate the output from the input using logic gates following the graph.

The input to a circuit is $(x,R)\in\Bp{0,1}^n\times\Bp{0,1}^r$ and consists of two parts: $\chi\in\Bp{0,1}^n$ represents the actual input, and $R\in \Bp{0,1}^r$ are the random bits, sampled from $U^r=\text{Uniform}\Sp{\Bp{0,1}^r}$, the uniform distribution over $\Bp{0,1}^r$. We denote the output of the circuit as $C(\chi,R)$. The \emph{depth} $d$ and \emph{width} $w$ of the circuit are respectively defined as
\begin{align*}
    d=\max_{v\in\gV}l(v), w=\max_{\ell\in[d]}|\{v|v\in\gV,l(v)=\ell\}|.
\end{align*}
That is, depth is the total number of layers, and width is the largest number of vertices in a layer. Depth represents the minimum number of sequential steps to carry out the computation represented by $C$, and width represents the minimum number of memory bits needed to carry out the computation. Finally, we say that $C$ simulates a random function $f:\Bp{0,1}^n\to\Bp{0,1}$ if
\begin{align*}
    \text{Pr}\Mp{f(\chi)=\psi}=\text{Pr}_{R\sim U^r}\Mp{C(\chi,R)=\psi}\text{ for all } \chi\in\Bp{0,1}^n,\psi\in\Bp{0,1}^m.
\end{align*}
In this paper, we use Latin letters $x,y$ to represent token sequences, and Greek letters $\chi,\psi$ to represent boolean sequences. We use Boolean circuit to model the computational complexity to generate distributions. All distributions in this paper are assumed to be generated by some circuit. Furthermore, we also model the DLMs, including the predictor $p$, $\mathcal{F}$ and $\mathcal{G}$, as circuits. Since the circuit model is a standard model of modern computers, it is natural to use circuits to model the distributions that can be sampled by computers, as well as the computation in DLMs. Besides, it has been standard to study the expressive power of neural networks using circuit models~\citep{li2024chain,merrill2023parallelism}.

\paragraph{Complexity Classes.} A \emph{problem} is a random map $\gL:\{0,1\}^*\to\{0,1\}^m$. A set of circuits $\{C^n\}_{n\in\sN}$ simulates problem $\gL$ if
\begin{align*}
    \text{Pr}\Mp{\gL(\chi)=\psi}=\text{Pr}_{R\sim U^{r^{|\chi|}}}\Mp{C^{|\chi|}(\chi,R)=\psi}\text{ for all } \chi\in\Bp{0,1}^*,\psi\in\Bp{0,1}^m.
\end{align*}
Here $r^n$ denotes the number of random bits in $C^n$.

A basic set of gates for circuits is $\Bp{\mathsf{AND},\mathsf{OR},\mathsf{NOT},\mathsf{ID}}$. Gates $\mathsf{AND},\mathsf{OR}$ has 2 in-degrees, while $\mathsf{NOT},\mathsf{ID}$ has 1 in-degree. All of them have 1 out-degree. Gates $\mathsf{AND},\mathsf{OR},\mathsf{NOT}$ realize the logic computations suggested by their names, and $\mathsf{ID}$ outputs the input without change. A problem $\gL$ is in $\mathsf{NC}^k$ where $k\in\sN$, if there exists a set of circuit $\{C_n\}_{n\in\sN}$, such that it computes $\gL$, and any $C_n$ has $O(\text{poly}(n))$ gates and $O\Sp{(\log n)^k}$ depth. Now we generalize the complexity classes to more powerful classes. If we allow $\mathsf{AND},\mathsf{OR}$ to have unbounded in-degrees, the corresponding complexity class is called $\mathsf{AC}^k$. If we further allow $\mathsf{MAJORITY}$ gates with unbounded in-degrees, which outputs 1 if and only if there are more 1’s than 0’s in its input, the corresponding complexity class is called $\mathsf{TC}^k$. $\mathsf{MAJORITY}$ could be thought of as allowing the circuit to count, as assessing which digit is majority essential requires the circuit to count their numbers of appearance. It is known that for all $i\in\sN$, $\mathsf{NC}^i\subseteq\mathsf{AC}^i\subseteq\mathsf{TC}^i\subseteq\mathsf{NC}^{i+1}$~\citep{vollmer1999introduction}.

\paragraph{Circuit Model for DLM.} 
Recall that we restrict the vocabulary\footnote{Larger vocabularies are handled by standard reduction that 
encodes tokens by binary codewords and expands the sequence length accordingly.} $\gV=\{a,b\}$ to be binary.
In order to encode 
the vocabulary as well as the masked token $\m$, we will use a fixed 2‑bit encoding
\[
a \rightarrow 00, \quad b \rightarrow 01, \quad \m \rightarrow 10,
\]
with the unused encoding $11$ being invalid and never produced. 
For a sequence $x\in(V\cup\{M\})^L$ let its encoding be $\chi\in\Bp{0,1}^{2L}$, where the pair $(\chi_{2i-1},\chi_{2i})$ encodes $x^i$.
When convenient, we allow circuits to take in $a,b$, by referring to the numerical value of $a$ and $b$ as $0$ and $1$, rather than their encoded values $00$ and $01$.

A predictor $p$ of a DLM is a set of $L$ circuits whose inputs are from the encoding space.
For each position \(i\in[L]\) we have a circuit $C^i:\{0,1\}^{2L}\times \{0,1\}^{r_i} \rightarrow \{00, 01, 10\}$, which takes as input the shared encoded sequence $\chi_t$ and uses $r^i$ random bits $R^i\sim U^{r^i}$ to simulate $p_i(\cdot | x_t)$. 
Analogous to the definition of DLMs before, we require that the random bits for different $i$ are independent so that $p(x|x_t)=\prod_{i=1}^L p^i\Sp{x^i\middle| x_t}$. Furthermore, each circuit $C_i$ can only unmask a masked token, i.e.,  if 
$\chi_{t}^{2i-1:2i}\neq 10$ then the output pair at $i$ equals the input pair. 

Masking and remasking are done analogously. In particular, function $\gF$ is represented by a deterministic boolean circuit $f:\{0,1\}^{2L}\to \{0,1\}^L$, where $f(\chi_t)^i$ represents whether the $i$-th position should be unmasked. Similarly, function $\gG$ is represented by a possibly randomized circuit function $g:\{0,1\}^{2L}\times\Bp{0,1}^r\to\{0,1\}^L$, where $r\sim U^r$ and $g(\chi_t,r)^i$ represents whether the $i$-th position should be remasked.

Throughout this paper, we assume that DLMs all consist of constant-depth circuits. In this way, \emph{we could use the number of rounds $D$ during the DLM inference to reflect the sequential steps used by the DLMs to sample distributions.}
\section{DLMs are Efficient Samplers}\label{sec:efficiency}
In this section, we prove that DLMs can simulate circuits with the minimum possible parallel computation steps and memory needed. As mentioned in~\Cref{sec:bool}, the depth and width of a circuit provide a natural formalization of the minimum sequential steps and memory needed to sample from a distribution.
In this section, we denote a DLM as $M$ and use $M(q)$ to represent the output from the DLM given input $q$ generated using the DLM following~\Cref{alg:dlm}.

\subsection{DLM with Chain of Thought}
The following theorem formalizes the statement that DLMs can generate any distribution with the minimum possible sequential steps.
\begin{theorem}\label{thm:cot}
    Let $C$ be a circuit with depth $d$, $m$ output bits, $n$ input bits, random  input bits $R\sim U^r$ and $N$ vertices. There is a DLM $M$ with CoT and length $L=N$, such that for every input $q$, $M(q)$ generates the same distribution as $C(q,R)$. Furthermore, the number of decoding steps of $M$ is $d$, and the corresponding predictor $p$ and unmask position policy $\gF$ are both represented by constant-depth circuits.
\end{theorem}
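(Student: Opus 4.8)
The plan is to simulate the circuit $C$ layer by layer, using one DLM decoding round per circuit layer, with the CoT tape holding (the encoding of) the value computed at each vertex. Set $L = N$ and fix a bijection between the $N$ vertices of $C$ and the $N$ positions of the sequence, ordered so that all layer-$1$ vertices come first, then layer-$2$ vertices, and so on; within the input layer we put the $n$ input vertices first, and we arrange the $m$ output vertices to be exactly the last $m$ positions so that the final extraction $o = x_0^{L-m+1:L}$ reads off $C$'s output. The random bits $R \sim U^r$ of the circuit are supplied through the per-position random bits $R^i$ of the predictor: a vertex $v$ that is a ``random input'' gate simply outputs one fresh random bit, and since the predictor's random bits across positions are independent, this reproduces $R \sim U^r$ exactly. (If $r$ exceeds the number of such vertices one pads; by definition $C$ has its random inputs among its $N$ vertices, so this is fine.)

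The unmask policy $\gF$ is the deterministic ``decode layer $j$ in round $j$'' rule: on input $\chi_t$ it outputs the indicator of the set of positions whose layer index equals $1 + (\text{number of already-unmasked positions' layers that are complete})$. Concretely, because we decode in a fixed order and exactly the layer-$j$ block in round $j$, $\gF$ only needs to locate the first maximal block of mask tokens and return it; this is computable by a constant-depth circuit (it is essentially a prefix condition on the fixed partition of $[L]$ into layers, which are intervals known at ``compile time''). In round $1$ we unmask the input block and write $q$ into the first $n$ positions (the predictor at an input position copies the corresponding bit of the prompt, which is visible in $\chi_t$) and write fresh random bits into the random-input positions. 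In round $j \ge 2$ we unmask the layer-$j$ block: the predictor circuit $C^i$ at a position $i$ holding a gate $v$ reads the (already unmasked, hence $\gV$-valued) positions corresponding to $v$'s in-neighbors — which all lie in layer $j-1$, hence at fixed, known positions — and outputs $\mathsf{AND}/\mathsf{OR}/\mathsf{NOT}/\mathsf{ID}$ of those bits. Each such $C^i$ is a single gate plus the fixed wiring lookup, so it is constant depth; and it only ever writes to a position that was a mask, as required. After $d$ rounds every position is unmasked and equals the value $C$ assigns to the corresponding vertex, so the output positions carry $C(q,R)$, and the output distribution matches because the only randomness used is the per-position bits realizing $R$.

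The main obstacle — really the only place needing care — is making the ``locate the current layer'' logic for $\gF$ and the ``read my in-neighbors'' logic for $C^i$ genuinely constant depth, and checking that these constant-depth circuits are well-defined even on malformed inputs (e.g.\ an encoding of $11$, or a sequence whose mask pattern is not a clean layer prefix). For $\gF$ this is handled by hardwiring the layer partition: the positions of layer $j$ form a fixed interval $[a_j, b_j]$, so ``am I in the first all-mask layer'' is the conjunction of ``$\chi$ restricted to layers $< \ell$ is mask-free'' and ``$\chi$ restricted to layer $\ell$ is all masks'' for the unique such $\ell$, each a bounded fan-in-$\mathsf{OR}$ of position predicates — constant depth with unbounded fan-in $\mathsf{OR}$, or $O(\log L)$ with bounded fan-in; since we only claimed constant-depth circuits and the predictor/$\gF$ are allowed unbounded fan-in $\mathsf{AND}/\mathsf{OR}$ this is fine. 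For $C^i$, the in-neighbor positions are constants (part of the circuit description), so the circuit just ``reads position $a$ and position $b$'' — projections composed with one gate. Malformed inputs never arise along the intended execution path because $\gF$ is deterministic and we invoke the predictor only on the guaranteed layer pattern, so we may define the circuits arbitrarily (e.g.\ output the input pair) off that path; correctness of the generated distribution is unaffected. Putting these pieces together gives a DLM with CoT of length $L = N$, exactly $d$ decoding rounds, and constant-depth $p$ and $\gF$, generating the same distribution as $C(q,R)$.
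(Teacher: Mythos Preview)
Your approach is essentially the same as the paper's: both simulate $C$ layer by layer, with one decoding round per layer, positions in bijection with vertices (sorted by layer), the predictor at a gate-position applying that gate to its (already-unmasked) in-neighbor positions, and $\gF$ reading the mask pattern to pick out the next layer block.

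Two small points where the paper's execution is tighter. First, you worry that your $\gF$ needs unbounded fan-in (or $O(\log L)$ depth) to test ``layers $<\ell$ are all unmasked and layer $\ell$ is all masked.'' The paper sidesteps this: since layers are decoded atomically, it suffices to check a \emph{single} representative bit per layer (the mask bit at position $s^j$, the first vertex of layer $j$). Writing $\chi^{2s-1}\in\{0,1\}^d$ for these $d$ bits, the indicator ``layer $j$ is the next to decode'' is just $\bigl(\mathsf{ShiftR}(\chi^{2s-1})\wedge\lnot\chi^{2s-1}\bigr)^j$, which is fan-in-$2$ constant depth. Second, the prompt $q$ is already placed in positions $1,\dots,n$ at initialization (Algorithm~1 starts from $q\m^{L-n}$), so no round is spent writing it; the paper accordingly assumes the first layer contains exactly the $n$ input vertices and no random bits, so layer $1$ is fully decoded before round $1$ begins. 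Neither point affects correctness of your overall scheme.
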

Before presenting the proof, we first define a function that is very useful for our construction.
\begin{definition}
    Boolean function $\mathsf{ShiftR}:\Bp{0,1}^n\to\Bp{0,1}^n$ is defined as follows. For any $\chi\in\Bp{0,1}^n$,
    \[
    \mathsf{ShiftR}(\chi)^i = 
    \begin{cases}
        \chi^{i-1} & \text{if } 1<i\leq n \\
        1 & \text{if } i=1
    \end{cases}
    \]
\end{definition}
Simulating this function is in $\mathsf{NC}^0$. In other words, this circuit can be realized in $O(1)$ with $O(\text{poly}(n))$ $\mathsf{AND}, \mathsf{OR}$ with 2 in-degree and $\mathsf{NOT}$ gates. The $1$ can be realized by $1=\eta\vee\lnot\eta$ for any boolean variable $\eta$, and the others come directly from the input.
\begin{proof}[Proof of~\Cref{thm:cot}.]
    Let us start by introducing some notations. 
    
    \medskip
    \noindent\textbf{Notations.} Without loss of generality, we can assume that all input vertices are in the first layer, all output vertices are in the last layer, and there is no random bit in the first layer. We define the following variables:
    \begin{itemize}
        \item $v^1,\dots,v^N$ be the list of vertices in the circuit; they are sorted in a way that the layer number increases in the sequence, 
        \item $u^1,\dots,u^N$ be the corresponding variables that are output from the vertices;
        \item $l^1,\dots,l^N$ be the layer number of each vertex;
        \item $s^1,\dots,s^d$ be the index of the first vertex in every layer (and $s^0=0, s^{d+1} = L + 1$ for simplicity);
        \item $w^1,\cdots,w^d$ be the width of each layer in $C$, i.e. $w^i=s^{i+1}-s^i$ for all $i=1,\dots,d$.
    \end{itemize}

    In this proof, \emph{we use the subscript of $x$ to index the iteration that is being carried out instead of the noise level for convenience.} For example, the initial sequence for the model is $x_1=q\m^{L-n}$, and becomes $x_2$ after the first iteration.

    \medskip
    \noindent\textbf{The generation process.} Now let us describe the generation process that we wish to realize. Our construction carries out the computation of circuits in layer $i$ during iteration $i$, appends the output to the end of existing outputs, and carries out the computation for layer $i+1$ based on it. Mathematically, we realize the following intermediate sequences:
    \[
    x_i^j = 
    \begin{cases}
        u^j & \text{if } j<s^{i+1} \\
        \m & \text{if } j\geq s^{i+1}
    \end{cases}.
    \]
    As a result, the generation process stops at sequence $x_{d}=u^1\cdots u^N$ and $x_{d}^{L-m+1:L}$ equals $o$. Below we will describe how to construct the DLM $M$ (specifically, the functions $\gF$ and the predictor $p$) that realizes this generation process.

\medskip
\noindent\textbf{Construction of $\gF$.} To implement the process above, function $\gF$ needs to identify the current iteration index $i$ from $x_i$, and produce $\gS=\Bp{s^{i+1},\dots,s^{i+2}-1}$ accordingly. Since $\chi_i^{2j-1}$ denotes whether $x_i^j$ is a mask according to the token encoding, it follows that the sequence $\Bp{\chi_i^{2s^j-1}}_{j=1,\dots,d}$ denotes whether each layer in the circuit is decoded in $x_i$. Let $\chi_i^{2s-1}\in\{0,1\}^d$ be the concatenation of this sequence, we have that $(\chi_i^{2s-1})^j$ and $\mathsf{ShiftR}\Sp{\chi_i^{2s-1}}^j$ denote whether layer $j$ in the circuit is decoded in $x_{i}$ and $x_{i+1}$, respectively. Hence $\Sp{\mathsf{ShiftR}\Sp{\chi_i^{2s-1}}\wedge\lnot\chi_i^{2s-1}}^j$ denotes whether layer $j$ should be decoded at iteration $i$ from $x_i$. In other words, it equals 1 only when $j=i+1$. Putting everything together, we set 
    \begin{align*}
        f(\chi_i)^j=\Sp{\mathsf{ShiftR}\Sp{\chi_i^{2s-1}}\wedge\lnot\chi_i^{2s-1}}^{l^j},
    \end{align*}
    which is realized by a constant-depth circuit using $\mathsf{OR}$ with 2 in-degree and $\mathsf{NOT}$.

    \medskip
    \noindent\textbf{Construction of $p(\cdot|x_t)$.} To implement the process above, our $p(\cdot|x_t)$ only needs to replicate the computation from the $i$-th layer to the $(i+1)$-th layer. Notice that the construction of $p(\cdot|x_t)$ is identical for all iterations. \emph{The layer of the circuit to simulate is controlled by $\mathcal{F}$}, which extracts the information of $i$ from the mask patterns in $x_t$ as described in the last paragraph.

    Denote the circuit that computes the outputs from the $i$-th layer to the $(i+1)$-th layer by $C_i:\gV^{w^i}\times\{0,1\}^{r^{i+1}}\to\{0,1\}^{w^{i+1}}$. \footnote{To simplify notation, here we assume that the circuit takes in tokens from $\gV$, which is mapped to the binary representation before being passed to the actual circuit} Here $r^{i+1}$ represents the number of random bits in layer $i+1$. Then the output for $p(\cdot|x)$ is: for $s^{i+1}\leq j\leq s^{i+2}-1$
    \begin{align*}
        \chi_{i+1}^{2j-1}=0,\chi_{i+1}^{2j}=C_i\Sp{x_i^{s^i:s^{i+1}-1},R_i}^j,R_i\sim U^{r^{i+1}}.
    \end{align*}
    Output bits for other $j$ are not specified because other positions are not unmasked and are hence not relevant.

    Finally, we need to confirm that the outputs of $p(\cdot|x_t)$ at different positions are independent. Note that in $C_i$, the random bits $R_i$ is only used for generating the random bits in the $(i+1)$-th layer. As a result, every bit of $R$ becomes a random bit in the $(i+1)$-th layer, and is not involved in the calculation of any other output bit. Hence, the constructed $p(\cdot|x_t)$ indeed realizes an entry-wise independent output distribution.
\end{proof}
In the construction, we have to record intermediate calculation results (sequence $u$) as each application of $p$ can only simulate a constant-depth circuit. Meanwhile, we cannot erase previous intermediate results without remasking or revision, despite only needing them once. As a result, though the time complexity is amenable, the construction incurs a large memory footprint.

This result demonstrates that DLMs can in principle generate distributions faster than their autoregressive counterparts and hold great potential for fast parallel generation. However, in practice, being able to sample in $O(d)$ rounds instead of $O(N)$ does not directly imply that we could reduce computation or wall-clock latency compared to autoregressive models in real-world practice. As a concrete example, we compare the computation cost and wall-clock latency in one decoding step between an autoregressive Transformer with KV caching and a DLM implemented by a Transformer of the same size. In every attention layer, an autoregressive model only needs to compute the attention output for one position, whereas a DLM needs to compute the attention output for all positions. Hence, a DLM needs $O(L)$ times more FLOPs than an autoregressive model. The computation of the $L$ tokens is parallelizable for the DLM, so \emph{if the hardware supports enough degree of parallelism}, they have the same wall-clock latency. However, if the hardware does not support enough parallel computing, the wall-clock latency for the DLM can be slower.

\subsection{DLM with Chain of Thought and Remasking or Revision}
With remasking or revision, we have the power to erase intermediate results that are no longer needed. In this part, we show that with remasking or revision, we can sample from any distribution not only with the minimum possible sequential steps, but also with the minimum amount of memory.
\begin{restatable}{theorem}{cot}\label{thm:cotr}
    Let $C$ be a circuit with depth $d$, width $w$, $m$ output bits, $n$ input bits, and random input bits $R\sim U^r$. There exists a DLM $M$ with CoT and remasking with length $L=2w+2\lceil\log (d+1)\rceil$, such that the distribution of $M(q)$ is the same as $C(q,R)$. Furthermore, the number of decoding steps is $d+1$, and $p,\gF,\gG$ are all represented by circuits with $O(\log d)$ depth.
\end{restatable}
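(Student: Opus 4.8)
The plan is to recycle the layer-by-layer simulation of \Cref{thm:cot}, but to replace its ever-growing tape (which stored every node value) by a fixed-width tape of $2w+2\lceil\log(d+1)\rceil$ tokens, using remasking to discard each layer as soon as the next one has been computed. Partition the tape into two \emph{work buffers} $B_0,B_1$ of $w$ tokens and two copies $K_0,K_1$ of a $\lceil\log(d+1)\rceil$-bit binary \emph{clock}; the two copies of everything are needed because a DLM step can only write to masked positions, so to overwrite layer $i$ by layer $i+1$ one writes into the (masked) inactive copy and then remasks the old one. Pad every layer of $C$ to width exactly $w$ with dummy constant gates, and let $C_i$ be the depth-one sub-circuit (copied verbatim from $C$) computing layer $i+1$ from layer $i$ together with the fresh random bits introduced at layer $i+1$. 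The invariant to maintain after iteration $i$ is: $B_{i\bmod 2}$ holds the encoded node values of layer $i$, $K_{i\bmod 2}$ holds $i$ in binary, and $B_{1-(i\bmod 2)}$, $K_{1-(i\bmod 2)}$ are all masks.

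Given this invariant I would define the three components as follows. The (deterministic) unmasking policy $\gF$ reads the unique unmasked clock copy to recover the current layer index $i$ and outputs $\gS$ = the $w$ positions of the inactive buffer together with the positions of the inactive clock copy; an all-masked clock is interpreted as the initialization iteration, where $\gF$ unmasks the non-prompt slots of $B_1$ and the copy $K_1$. The predictor $p$ likewise recovers $i$ from the unmasked clock; on the buffer positions it outputs $C_i$ applied to the active buffer's contents, the index $i$ selecting among $C_1,\dots,C_d$ via an $O(\log d)$-depth fan-in-two multiplexer, and it routes each fresh random bit of layer $i+1$ to exactly one output position, exactly as in \Cref{thm:cot}, so the entrywise-independence requirement is preserved; on the clock positions $p$ deterministically writes $i+1$ using an $O(\log d)$-depth increment. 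Finally, the remasking policy $\gG$ runs \emph{after} $p$, when both clock copies may be unmasked — the stale one holding $i$, the fresh one holding $i+1$ — so it takes the larger of the two to obtain $i+1$ and then remasks the stale buffer $B_{i\bmod 2}$ and the stale clock copy $K_{i\bmod 2}$. All three are deterministic (a valid special case of the randomized $\gG$) and have depth $O(\log d)$, dominated by the clock arithmetic and the layer-selection multiplexer.

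With these pieces I would verify by induction that one iteration preserves the invariant, the key points being that $p$ only writes to positions $\gF$ just unmasked, that $\gG$ only remasks stale positions, and that the read-then-increment ordering of the clock is consistent with the ``larger copy is current'' rule. I would use iteration $1$ for initialization (placing $q$ into $B_1$'s first $n$ slots, dummies into the rest, and $1$ into $K_1$), iterations $2,\dots,d$ for the transitions $C_1,\dots,C_{d-1}$ producing layers $2,\dots,d$, and the final no-remask step of \Cref{alg:dlm} (iteration $d+1$) to route the $m$ output bits of layer $d$ into positions $L-m+1\!:\!L$; this accounts for the $d+1$ decoding steps. Choosing the tape layout and the placement of the prompt and output so that $x_1=q\m^{L-n}$ holds while positions $L-m+1\!:\!L$ stay masked until the last step (using $m\le w$ and, if needed, overwriting the now-defunct clock region in that step) is routine but fiddly bookkeeping that I omit.

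The main obstacle is the coordination within a single iteration caused by the fact that a DLM step first unmasks and writes and only then remasks, so the clock is transiently in an ambiguous two-copy state when $\gG$ is invoked; getting $\gG$ (and, in corner cases, $\gF$ on the next step) to read the right value forces the ``take the larger copy'' device together with careful off-by-one tracking of which physical buffer and clock copy is active at each of the two sub-steps. A secondary difficulty is meeting the $O(\log d)$ depth bound using only basic fan-in-two gates for all the index bookkeeping — recovering $i$ from the clock, the $d$-way selection of $C_i$, and the increment — while simultaneously preserving the entrywise independence of $p$, which again requires each per-layer random bit of $C$ to be consumed by a single output position, exactly as in the proof of \Cref{thm:cot}.
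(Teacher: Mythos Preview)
Your proposal is correct and follows essentially the same approach as the paper: two alternating width-$w$ buffers plus two $\lceil\log(d+1)\rceil$-bit clock copies, a per-iteration layer-$i$ simulation via an $O(\log d)$-depth multiplexer over $C_1,\dots,C_d$, an $O(\log d)$-depth increment for the clock, and remasking of the stale block. The only cosmetic differences are that the paper fixes a concrete layout (so that the prompt lands in the first buffer and, assuming $d$ even, the final layer lands at positions $L-m+1{:}L$ without an extra routing step) and resolves the ``which clock to read'' ambiguity by checking whether position $1$ is masked rather than by your take-the-larger device.
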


\medskip\noindent\textbf{Some useful functions.}
Again, we first introduce some functions that are useful for the construction, and then show that they can be realized in constant depth.

\begin{itemize}
    \item Function $\mathsf{bin}:\sN\to\Bp{a,b}^{\lceil\log(d+1)\rceil}$ is a function that turns a natural number into its binary form.
    \item Boolean function $\mathsf{ADD}:\gV^d\to\Bp{0,1}^n$ is defined as the function that outputs the binary number represented by the input plus one. When input is $b^d$, the output is $0^d$.
    \item Boolean function $\mathsf{IDENTIFY}_i:\Bp{a,b}^d\to\Bp{0,1}$ where integer $i<2^d$ is defined as outputting 1 if and only if the input is $\mathsf{bin}(i)$.
\end{itemize}

Note that $\mathsf{ADD}$ is in $\mathsf{NC}^1$, following from a well-known result in circuit complexity~\citep{brent1982}. This means that the $\mathsf{AND}$ logic with $d$ inputs can be realized by $\mathsf{AND}$ with 2 in-degrees in $O(\log d)$ depth. The same holds for $\mathsf{IDENTIFY}_i$, which follows from its alternative definition:
\begin{align*}
    \mathsf{IDENTIFY}_i(x)=\bigwedge_{j=1}^d\Sp{\chi^j\wedge\mathsf{bin}(i)^j}\vee \Sp{\lnot\chi^j\wedge\lnot\mathsf{bin}(i)^j}.
\end{align*}
Note that $O(\log d)$ is still constant with regard to $n$, so using these two circuits do not violate our restriction on constant-depth circuits.

\begin{proof}[Proof of~\Cref{thm:cotr}.]
    In this proof, we use the same notations as those of~\Cref{thm:cot} as the proof strategy is similar. Again, we assume that all input vertices are in the first layer, all output vertices are in the last layer, and there is no random bit in the first layer. Furthermore, we assume that $d$ is even without loss of generality.

    Before describing the generation process, we first partition $x_i$ into two blocks, each with length $w+d^*$. Here $d^*=\lceil\log (d+1)\rceil$. With $d^*$ bits, we can record a layer index. Each block is used to record the output from a layer, together with the binary encoding of the layer index. These two blocks are used in an alternating way to store the output from the current layer and that of the next layer. Similar to the construction in~\Cref{thm:cot}, we simulate the calculation of one layer in each iteration.

\medskip
\noindent\textbf{The generation process.} Now we formally describe the generation process that we wish to realize. As before, we use the subscript of $x$ to denote the iteration index. In the beginning, we have $x_0=q\m^{L-n}$. For $0<i\leq d$, we set
    \begin{align*}
        &x_i=u^{s^i:s^{i+1}-1}\m^{w-w^i}\mathsf{bin}(i)\m^{w+d^*}&\text{ when $i$ is odd,}\\
        &x_i=\m^{w+d^*}\mathsf{bin}(i)\m^{w-w^i}u^{s^i:s^{i+1}-1}&\text{ when $i$ is even,}
    \end{align*}
    In every iteration, we calculate the output of the next layer based on the output of the current layer and write it in the masked block. After that, we remask the block, recording the current layer output to reserve space for the next iteration output. In the end, the DLM outputs $o=x_d^{L-m+1:L}$.
 
\medskip\noindent
\textbf{A useful construction of multiplexers.} Before delving into the construction detail, let us introduce a structure that appears many times: we need to construct a circuit that selects which circuit to use based on current layer index $i$, i.e. a multiplexer. Formally, we have circuits $D_1,\dots, D_d$ with single outputs, and we want to construct a circuit $D$ such that outputs $D(x,\mathsf{bin}(i))=D_i(x)$.
    This can be realized with an $O(\log d)$ overhead in circuit depth and $O(\text{poly}(d))$ gates because we may construct
    \begin{align*}\label{eq:if}\tag{$\star$}
        D(x,\mathsf{bin}(i))=\bigvee_{k=1}^d\mathsf{IDENTIFY}_k(i)\wedge D_i(x).
    \end{align*}
    Here the OR logic with $d$ inputs can be realized by $\mathsf{OR}$ with 2 in-degrees in $O(\log d)$ depth and $\mathsf{IDENTIFY}$ has $O(\log d)$ depth. In this proof, $\mathsf{bin}(i)$ is either $x^{w+1:w+d^*}$ or $x^{w+d^*+1:w+2d^*}$, whichever is not masked. Formally, for $j=1,\dots,d^*$,
    \begin{align*}
        \mathsf{bin}(i)^j=\Sp{\chi^1\wedge\chi^{2(w+d+j)}}\vee\Sp{\lnot\chi^1\wedge\chi^{2(w+j)}}.
    \end{align*}
    \emph{As a result, in the rest of the proof, when we need a multiplexer, we only need to construct $D_i$ separately.}

\medskip\noindent
\textbf{Construction of $\gF,\gG$ and $p(\cdot|x_t)$.}
    We first construct $\gF$. When $i=0$, we need the set of unmask positions $\gS_0=\{w+1,\cdots,w+d^*\}$. For $0<i<d$, we need the set of unmask positions
    \begin{align*}
        &\gS_i=\{w+d^*+1,\dots,w+2d^*,L-w^{i+1}+1,\dots,L\}&\text{when $i$ is odd,}\\
        &\gS_i=\{1,\dots,w^{i+1},w+1,\dots,w+d^*\}&\text{when $i$ is even.}
    \end{align*}
    Formally, let $D_i(x)=\mathbbm{1}\Bp{x\in\gS_i}$, then we could use the multiplexer constructed above to realize the desired $\gF$. Such multiplexer usage is similar for the construction of $\gG$ and $p$.

    Constructing $\gG$ is similar to constructing $\gF$. When $i=0$, $\gT=\varnothing$. For $0<i<d$,
    \begin{align*}
        \gT&=\{1,\dots,w+d^*\}&\text{ if $i$ is odd,}\\
        \gT&=\{w+d^*+1,\dots,L\}&\text{ if $i$ is even.}
    \end{align*}

    Finally, we construct $p$. Like the proof of~\Cref{thm:cot}, we only need to construct the output positions inside $\gS$. Again, we use $C_i$ to denote the circuit from the $i$-th layer token outputs to the $(i+1)$-th layer bit outputs. When $i=0$, we have
    \begin{align*}
        x^{w+1:w+d}=\mathsf{bin}(1).
    \end{align*}
    For $0<i\leq d$ and $i$ is odd we have
    \begin{align*}
        \chi^{2j-1}_{i+1}=0, \chi^{2j}_{i+1} = \begin{cases}
            C_i\Sp{x_i^{1:w^i},R_i}^{j-L+{w^{i+1}}}, & \text{if } L-w^{i+1}+1\leq j\leq L\\
            \mathsf{ADD}\Sp{x_i^{w+1:w+d^*}}^{j-w-d^*},   & \text{if } w+d^*+1\leq j\leq w+2d^*
            \end{cases}, R_i\sim U^{w^{i+1}}.
    \end{align*}
    This simulates the circuit computation from the $i$-th layer to the $(i+1)$-th layer, and increments the iteration index. Similarly, for $0<i\leq d$ and $i$ is even we have
    \begin{align*}
        \chi^{2j-1}_{i+1}=0, \chi^{2j}_{i+1} = \begin{cases}
            C_i\Sp{x_i^{L-w^{i}+1:L},R_i}^{j}, & \text{if } 1\leq j\leq w^{i+1}\\
            \mathsf{ADD}\Sp{x_i^{w+d+1:w+d+d^*}}^{j-w},   & \text{if } w+1\leq j\leq w+d^*
            \end{cases}, R_i\sim U^{w^{i+1}}.
    \end{align*}

\end{proof}

The $\lceil\log(d+1)\rceil$ term in the length $L$ comes from the fact that we can no longer infer the iteration step from the mask pattern, and need extra spaces to record the current iteration step. The proof is simpler for DLMs with revision, as we can directly rewrite $x_i$ by $x_{i+1}$, instead of using two chunks of memory in an alternating way. The theorem statement is as follows:
\begin{restatable}{theorem}{cot}\label{thm:cotrevision}
    Let $C$ be a circuit with depth $d$, width $w$, $m$ output bits, $n$ input bits, and random input bits $R\sim U^r$. There exists a DLM $M$ with CoT and revision with length $L=w+\lceil\log (d+1)\rceil$, such that the distribution of $M(q)$ is the same as $C(q,R)$. Furthermore, the number of decoding steps is $d+1$, and $p,\gF$ are both represented by circuits with $O(\log d)$ depth.
\end{restatable}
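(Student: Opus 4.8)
The plan is to keep the entire scaffolding of the proof of \Cref{thm:cotr} but to exploit revision so as to collapse its two alternating blocks into a single one, which is precisely what saves the factor of $2$ in the length. I would lay out the sequence as a single \emph{work block} in positions $1,\dots,w$ together with an \emph{index block} in the last $d^* := \lceil\log(d+1)\rceil$ positions, so that $L = w + d^*$. As in \Cref{thm:cot} and \Cref{thm:cotr}, assume without loss of generality that all input vertices lie in the first layer (so $w^1 = n$ and the layer-$1$ output is literally $q$), all output vertices lie in the last layer, and layer $1$ uses no random bits; reuse the notation $u^1,\dots,u^N$, $l^1,\dots$, $s^1,\dots$, $w^1,\dots$ of that proof, and let $C_i$ be the single-layer circuit (depth $1$) mapping the layer-$i$ token outputs together with fresh random bits $R_i$ (the random inputs of layer $i+1$) to the layer-$(i+1)$ outputs.

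The invariant to maintain is that after the relevant iteration the state $x_i$ satisfies $x_i^{1:w^i} = u^{s^i:s^{i+1}-1}$ (the layer-$i$ output) and $x_i^{w+1:w+d^*} = \mathsf{bin}(i)$, with the rest of the work block holding don't-care symbols. The first iteration takes $x_0 = q\m^{L-n}$ --- which already has $u^{1:w^1}=q$ in the right positions and an all-masked index block --- and simply writes $\mathsf{bin}(1)$ into the index block. Each subsequent iteration with source layer $i$ reads the current layer output from $x_i^{1:w^i}$ and the index $\mathsf{bin}(i)$ from $x_i^{w+1:w+d^*}$, evaluates $C_i$ to obtain the layer-$(i+1)$ output, writes it over positions $1,\dots,w^{i+1}$, and at the same time overwrites the index block with $\mathsf{ADD}(\mathsf{bin}(i)) = \mathsf{bin}(i+1)$. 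The reason a single block now suffices is that the predictor is a combinational circuit reading the whole current state, so overwriting the source positions in place with the target values is harmless --- every output bit is a function of the full input --- and under the revision semantics this is legitimate, since an already-unmasked work-block position may be selected by $\gF$ and then rewritten by $p$. The number of decoding steps is $d+1$, exactly as counted in \Cref{thm:cotr} (one iteration to install the initial index, one per layer transition, plus the trailing line of \Cref{alg:dlm} acting on the remaining don't-care positions). For the output: since the index block is no longer needed after the last transition, I let $p$ on that transition write the layer-$d$ output directly into positions $L-m+1,\dots,L$ (overwriting the index block, and the tail of the work block if $m>d^*$); the multiplexer below detects that this is the last transition because it has read $\mathsf{bin}(d-1)$ and $d$ is a fixed constant. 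Then $o = x_d^{L-m+1:L}$ has the same distribution as $C(q,R)$, because the random inputs of layer $i+1$ are partitioned across the output positions of iteration $i$, making the per-position outputs independent exactly as in \Cref{thm:cot}.

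For the circuit complexity of $\gF$ and $p$, I would reuse the multiplexer of \Cref{thm:cotr} verbatim: read $i$ off the (unmasked) index block, and for each $i$ select the intended behavior $D_i$ (which positions $\gF$ unmasks, and what $p$ writes there) by
\[
  D(x,\mathsf{bin}(i)) \;=\; \bigvee_{k} \mathsf{IDENTIFY}_k(i)\wedge D_i(x).
\]
Both $\mathsf{IDENTIFY}$ and the fan-in-$d$ disjunction have $O(\log d)$ depth, and $\mathsf{ADD}$ is in $\mathsf{NC}^1$, while each $C_i$ is depth $1$; hence $\gF$ and $p$ have depth $O(\log d)$, and --- this is the point of the revision version --- no remasking function $\gG$ is used at all.

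I expect the proof to contain no genuinely new idea beyond \Cref{thm:cotr}; the real work, and the only delicate part, is the bookkeeping forced by the single-block layout. Concretely, I would need to check that in-place overwriting is permitted by the revision model (that $\gF$ may re-select an unmasked position and $p$ may rewrite it), that the combinational predictor still reads the pre-update layer output correctly even though its output positions overlap its input positions, that the layer-$d$ output lands precisely in positions $L-m+1,\dots,L$ once the index block is repurposed, and that the total step count is exactly $d+1$. None of these is deep, but each matters, because --- unlike in \Cref{thm:cot} --- the iteration index can no longer be recovered from the mask pattern and must be read from the index block throughout the run.
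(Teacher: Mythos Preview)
Your proposal is correct and matches the approach the paper has in mind. The paper does not spell out a proof of \Cref{thm:cotrevision}; it only remarks, immediately before the statement, that ``the proof is simpler for DLMs with revision, as we can directly rewrite $x_i$ by $x_{i+1}$, instead of using two chunks of memory in an alternating way.'' Your construction---a single work block of width $w$ plus a $d^*$-bit index block, with the predictor overwriting the work block in place each round and incrementing the index via $\mathsf{ADD}$, all dispatched through the same $\mathsf{IDENTIFY}$-based multiplexer as in \Cref{thm:cotr}---is exactly the one-block collapse that sentence is pointing at, and your handling of the final step (redirecting the layer-$d$ output to positions $L-m+1,\dots,L$ by special-casing the branch for index $\mathsf{bin}(d-1)$) is a clean way to satisfy the output convention of \Cref{alg:dlm}. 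One cosmetic point: when you write ``$d$ is a fixed constant,'' what you mean is that $d$ is known to the construction (the DLM is built for a specific $C$), not that $d=O(1)$; you may want to rephrase to avoid confusion with the paper's use of ``constant depth.''
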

\section{DLMs are More Expressive with Remasking or Revision}\label{sec:expressive}

In the last section, we show how DLMs with appropriate sizes can efficiently simulate the generation process of Boolean circuits, and how enabling remasking or revision can significantly reduce the sequence length $L$ needed. It is natural to ask: for a DLM with fixed $L$, does remasking or revision enable us to generate new distributions with constant generation steps? This section gives an affirmative answer to this question by showing a separation on generating a specific distribution introduced in the next paragraph.

For the purpose of showing separation, we only need to consider the simpler case when the DLM takes no input ($q=\varnothing$), and is sampling from an unconditional distribution. Let $\oplus$ be addition modulo $2$. That is, $z_1\oplus z_2 = b$ for $b \in \{0,1\}$ iff $z_1+z_2 = b \pmod{2}$. It is well-known that $\oplus$ can be represented by $\mathsf{AND}$ and $\mathsf{OR}$ as
\begin{align*}
    \chi\oplus\psi=(\chi\vee\psi)\wedge\lnot(\chi\wedge\psi)\quad\text{for any $\chi,\psi\in\Bp{0,1}$}
\end{align*}
We define the parity function as follows.
\begin{definition}
    Boolean function $\Parity:\Bp{0,1}^n\to\Bp{0,1}$ is defined as follows. For any $\chi\in\Bp{0,1}^n$,
    \begin{align*}
        \Parity(\chi)=\bigoplus_{i=1}^n\chi_i.
    \end{align*}
\end{definition}
It is well known that $\Parity$ is not in $\mathsf{AC}^0$~\citep{FurstSaxeSipser1984,Hastad1986}. We show the separation result using the following distribution.
\begin{definition}
    Distribution $\calDpar_{n}$ is defined as the uniform distribution over all $n$-bit strings with even parity (i.e., $\Bp{ (z_1,\dotsc,z_n) | \mathsf{PARITY}(z) = 0}$). 
\end{definition}

In this section, we assume that DLMs consist of $\mathsf{AC}^0$ circuits. Note that constant-precision constant-depth Transformers with poly-sized embedding size can be simulated by $\mathsf{AC}^0$ circuits~\cite{li2024chain}, and most current DLM implementations are based on Transformers~\cite{nie2025large,yang2025mmada}. Besides, Transformers with hard attention mechanism, where each attention block can only attend to the position with the largest score, can be simulated by $\mathsf{AC}^0$ circuits as well~\cite{hao-etal-2022-formal}. Hence, we believe that our results reflect the true expressiveness of current architectures. In particular, for the task of sampling from $\calDpar_{n}$, the sampling algorithm described above is very simple and can be easily simulated by Transformers.

\subsection{Upper Bound}
In this part, we show that DLMs with revision can sample strings from $\calDpar_{n}$ in two steps.
\begin{theorem}\label{thm:revision-ac0}
    DLMs with revision and $L=n$ can generate $\calDpar_{n}$ in two steps, such that the predictor $p$ and function $\gF$ are both represented by $\mathsf{NC}^0$ circuits.
\end{theorem}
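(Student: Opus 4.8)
The plan is to generate $\calDpar_n$ in two decoding rounds: a first round that freely samples $n$ uniform bits, and a second round that uses revision to flip a single bit so that the parity becomes even. The key observation is that $\calDpar_n$ can be sampled by the following trivial two-stage procedure: draw $z^1,\dots,z^n$ i.i.d.\ uniform; compute $\beta = \Parity(z)$; if $\beta = 1$, flip $z^1$ to $z^1 \oplus 1$, otherwise leave everything unchanged. The output is uniform on even-parity strings because the map that conditionally flips the first bit is a bijection from $\{0,1\}^n$ onto $\{z : \Parity(z) = 0\}$ (it is $2$-to-$1$ from $\{0,1\}^n$ onto even-parity strings, and on each fiber the two preimages have opposite values of the first bit, hence equal probability $2^{-n}$ each, summing to $2^{-(n-1)}$, which is the uniform mass). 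The obstacle is that parity is not in $\mathsf{AC}^0$, so neither round can compute $\beta$ directly; the trick is that no round ever needs to — the parity information is carried implicitly by the already-revealed bits.

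First I would set up the DLM with $L = n$, $D = 2$, and no CoT or prompt. In round $1$, $\gF$ unmasks all $n$ positions (all are masked initially), and the predictor sets $p^i(a \mid \m^n) = p^i(b \mid \m^n) = 1/2$ independently for each $i$; this is realized by an $\mathsf{NC}^0$ circuit that ignores its input and uses one fresh random bit per position. After round $1$ the sequence $x$ is a uniformly random noiseless string $z \in \{0,1\}^n$. In round $2$, $\gF$ must unmask the remaining masked positions — but there are none, so formally $\gF$ returns the empty set and the final line of \Cref{alg:dlm} (``sample each remaining masked position'') is vacuous; the action happens through revision, which is applied when we define $p^i(\cdot \mid x)$ on the already-unmasked positions. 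Here I would define, for the second-round predictor, $p^1(\cdot \mid z)$ to put all its mass on $z^1 \oplus z^2 \oplus \cdots \oplus z^n$, and $p^i(\cdot \mid z)$ for $i \geq 2$ to put all its mass on $z^i$ (i.e., positions $2,\dots,n$ are unchanged).

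The crucial point making this an $\mathsf{NC}^0$ construction: although $p^1$ must output the full parity of $z$, its input $z$ is already a noiseless binary string of length $n$, so $z^1 \oplus z^2 \oplus \cdots \oplus z^n$ is computed by a circuit that only needs to read its $n$ input bits and XOR them — but wait, an XOR of $n$ bits is not $\mathsf{NC}^0$. To fix this I would instead observe that we want $p^1$ to produce $z^1$ XORed with the parity of the \emph{other} bits $z^2,\dots,z^n$, and the depth of this is $\Theta(\log n)$, not $O(1)$. So the correct construction spreads the work out differently: since the predictor's position-$i$ circuit may depend on all of $z$, and we merely need the joint result to be a bijection onto even-parity strings, I would instead have $p^i$ depend only on $z^i$ and $z^{i-1}$ by outputting the running XOR — no, that still needs to be computed. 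The honest resolution is that \emph{$\gF$ is $\mathsf{NC}^0$ but the predictor's parity computation is genuinely the hard part}, and one resolves it by using a CoT-free but width-$n$ construction where round $1$ writes $z$ together with all prefix-XOR partial sums is impossible in two rounds; hence the real argument must be that the predictor is allowed $\mathsf{NC}^0$ because \emph{each individual output bit} $p^i$ is a constant-depth function — and a single output bit equal to a full parity is precisely what $\mathsf{NC}^0$ forbids.

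Therefore the main obstacle, and the point I expect the proof to actually turn on, is arranging the second round so that each position's new value is a constant-depth function of $x$ while the tuple of new values still realizes the parity-correcting bijection. The resolution I would pursue: in round $1$, unmask only positions $1,\dots,n-1$ (so $p$ is $\mathsf{NC}^0$ as above) and leave position $n$ masked; in round $2$, $\gF$ unmasks position $n$ and the predictor sets $p^n(\cdot\mid x)$ to the single value $x^1\oplus\cdots\oplus x^{n-1}$ — still a big XOR, still not $\mathsf{NC}^0$. At this point I would accept that parity genuinely costs $\Theta(\log n)$ depth and that the theorem as I've read it must rely on a subtler device: namely, writing $z$ in round $1$ \emph{together with} a uniformly random ``guess'' bit $g$ for the parity in an auxiliary position, then in round $2$ using revision on position $1$ to conditionally correct based only on comparing two revealed bits — but verifying the guess is again a parity check. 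I would flag this as the crux to get right, resolve it by the construction the authors intend (most likely: round $1$ fills positions $1..n$ uniformly at random \emph{and} also, in the same parallel round, each position $i$ additionally exposes $z^{i-1}$ so that round $2$'s position-$1$ circuit can read a precomputed cumulative parity that was itself laid down by position-wise $\mathsf{NC}^0$ gates reading only adjacent bits across the two rounds), and then verify correctness of the resulting distribution via the bijection argument from the first paragraph and independence of the per-position outputs as required by the DLM-with-revision definition.
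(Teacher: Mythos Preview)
You correctly isolate the crux: in a DLM with revision, each position's new value must be an $\mathsf{NC}^0$ (hence constant-locality) function of the current sequence, so any scheme that asks one position to output a full $n$-bit XOR is dead on arrival. Both of your concrete attempts (flip $z^1$ by the global parity; fill $z^n$ with the parity of $z^1,\dots,z^{n-1}$) run into exactly this wall, and you never actually clear it --- the proposal ends with a sketch you yourself flag as unresolved.

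The missing idea is the one you brush past when you write ``running XOR --- no, that still needs to be computed.'' The point is that you do \emph{not} compute the prefix XORs from $z$; you generate them directly. Set $y^i = \bigoplus_{j\le i} z^j$. If $z \sim \calDpar_n$ then $(y^1,\dots,y^{n-1})$ is uniform on $\{0,1\}^{n-1}$ and $y^n = 0$, and this map $z \leftrightarrow y$ is a bijection with inverse $z^1 = y^1$, $z^i = y^i \oplus y^{i-1}$ for $i\ge 2$. So: in round~1, unmask all positions and write $y$ --- each $y^i$ for $i<n$ is an independent fair coin (one random bit, ignore the input) and $y^n = 0$; this is trivially $\mathsf{NC}^0$. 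In round~2, revise every position to $z^i = y^i \oplus y^{i-1}$ (with $z^1 = y^1$); each output bit depends on at most two adjacent input bits, hence $\mathsf{NC}^0$, and the per-position outputs are deterministic given $y$ so the conditional-independence constraint is vacuously satisfied. The function $\gF$ is the constant all-ones map throughout. That is the whole proof.

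In short, your ``generate $z$, then correct'' paradigm forces a global parity computation into a single output bit; the fix is to change coordinates to the prefix-sum representation, where sampling is coordinatewise independent and the inverse transform is local.
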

\begin{proof}
    Like the proof of~\Cref{thm:cot}, we first describe the generation process that we wish to realize, and then show the circuit construction. We also keep the notation that $x_i$ represents the sequence after the $i$-th generation step.

    Our goal is to generate $(z^1,\dotsc,z^n) \sim \calDpar_{n}$. Let $(y^1,\dots,y^n)\in\{0,1\}^n$ be defined as $y^i = \bigoplus_{j=1}^{i} z^j$. It is easy to verify that $(y^1,\dotsc,y^{n-1})\sim U^{n-1}$, and $y^n=0$. Furthermore, $z^1=y^1$ and $z^i=y^i\oplus y^{i-1}$ for $i=2,\dots,n$ by definition, so we can easily recover $z$ from $y$. Hence we may generate $\calDpar$ with the following process:
    \begin{align*}
        x_0=\m^n,\quad x_1=y,\quad x_2=z.
    \end{align*}
    The procedure above can be implemented by DLMs with revision easily: In the first step, we generate $(y^1,\dotsc,y^{n-1})\sim U^{n-1}$, and $y^n=0$. In the second step, we calculate $z^1=y^1$ and $z^i=y^i\oplus y^{i-1}$ for $i=2,\dots,n$. Below, we describe the circuit construction in detail.

    Function $\gF$ always outputs an all-one vector of length $n$. Formally $f(\chi)^j=1$ for all $\chi\in\Bp{0,1}^{2n}$.

    Now let us describe the circuit for $p(\cdot|x_t)$. We need to first identify whether we are in $i=0$ or $i=1$ by looking at whether the first token is masked ($\chi^1$), and then carry out the corresponding computations. Hence, for every $i \in \{0,1\}$,
    \begin{align*}
        &R\sim U^{n-1},\\
        &\chi_{i+1}^{2j-1}=0,\chi_{i+1}^{2j}=\begin{cases}
            \Sp{\chi_i^{1}\wedge R^1}\vee\Sp{\lnot\chi_i^1\wedge\chi_i^{2}}&\text{if $j=1$}\\
            \Sp{\chi_i^{1}\wedge R^j}\vee\Sp{\lnot\chi_i^1\wedge\Sp{\chi_i^{2j}\oplus\chi_i^{2j-2}}}&\text{if $1<j<n$}\\
            \Sp{\chi_i^{1}\wedge 0}\vee\Sp{\lnot\chi_i^1\wedge\Sp{\chi_i^{2j}\oplus\chi_i^{2j-2}}}&\text{if $j=n$}
        \end{cases}.
    \end{align*}

    The above shows $p(\cdot|x_t)$ can be implemented easily by a constant-depth circuit (the constant-depth circuit above maps $\chi_i$ to $\chi_{i+1}$ for $i \in \{0,1\}$). Also, note that every position of $\chi$ depends on at most one random input bit, so the outputs for different positions are independent conditioned on $\chi_i$. This completes the proof.
\end{proof}
A similar result holds for DLMs with remasking, as stated below.
\begin{theorem}\label{thm:remasking-ac0}
    DLMs with remasking and $L=n$ can sample $\calDpar_{n}$ in $O(1)$ steps, such that the corresponding circuits are in $\AC^0$. Here, we also allow the unmask position predictor $\gF$ to depend on the step number in the generation process.
\end{theorem}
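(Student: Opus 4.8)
The plan is to adapt the construction behind \Cref{thm:revision-ac0}. There the two revision steps $\m^{n}\to y\to z$ first generate the prefix-XOR string $y$ (uniform on the first $n-1$ coordinates, with $y^{n}=0$) and then rewrite it \emph{in place} via the local rule $z^{i}=y^{i-1}\oplus y^{i}$. Since remasking cannot change a revealed token, I would realize the rewriting step instead as a short sequence of remask-then-rewrite rounds: the first round writes the free bits $y^{1},\dots,y^{n-1}$ to the $n$ cells (leaving one cell masked to play the role of $y^{n}=0$), and then, over a constant number of further rounds, a step-dependent unmasking policy $\gF$ together with a remasking policy $\gG$ progressively convert the $y$-layout into the final $z$-layout. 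At each round one remasks a carefully chosen set of cells and, in the next round, writes the corresponding $z$-values, always arranging that every $y$-value still needed is held by an as-yet-untouched cell or is recoverable from a constant number of already-finalized $z$-cells --- which is plausible because $z^{i}=y^{i-1}\oplus y^{i}$ is local, so only a bounded window of $y$'s feeds each output.

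Carrying this out, I would (1) fix the cell layout and the remask/rewrite schedule, i.e.\ the order in which cells receive their final values, so that no cell is remasked before the bits determining its final value are secured; (2) write down $p$, $\gF$, $\gG$ for each of the constantly-many rounds and check they are all in $\AC^{0}$ --- every operation needed is an XOR of a constant number of inputs, realizable with fan-in-$2$ $\mathsf{AND}/\mathsf{OR}/\mathsf{NOT}$, and letting $\gF$ depend on the constant step index is free; (3) verify that the composed process carries the uniform free bits $(y^{1},\dots,y^{n-1})$ onto $\calDpar_{n}$, which is the same linear-algebra check as in the revision proof, so that the output distribution is exactly $\calDpar_{n}$. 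A remasking variant of \Cref{thm:cotr} specialised to the depth-$2$, width-$n$ circuit that samples $\calDpar_{n}$ would already give such a statement, but with $L\approx 2n$; the point here is to economise down to $L=n$ by exploiting the locality just mentioned.

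The heart of the proof --- and the main obstacle --- is step (1): fitting the schedule into $O(1)$ rounds. Two pressures fight us. First, remasking erases information, so the conversion must never remask a cell whose value is still needed and not yet backed up; with $L=n$ there is essentially no scratch space, so this is tight, and the naive ``save everything, then convert'' schedule costs $\Theta(n)$ rounds. Second, the even-parity constraint is \emph{global}: every output coordinate affects $\Parity(z)$, so the overwrite dependencies among the $n$ cells cannot be made acyclic by any single local pass (a directed-acyclic dependency pattern in which every cell is ``balanced'' forces the associated linear map to have rank below $n-1$), while the obvious way to close the dependency loop forces computing the parity of $\Theta(n)$ bits in one round, which is not in $\AC^{0}$. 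The resolution I would pursue is to break the cyclic dependency using a constant amount of fresh randomness at a ``seam'' (or a constant-size overlap between the regions processed in consecutive rounds) and then compensate so that the even-parity law is exactly restored, completing the conversion in a bounded number of rounds while keeping every round in $\AC^{0}$; engineering this compensation to be both exact and $\AC^{0}$-computable is the delicate part.
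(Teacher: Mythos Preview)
Your plan correctly locates the obstacle but does not clear it: step~(1) --- the remask/rewrite schedule that converts the prefix-XOR layout $y$ into the output layout $z$ \emph{in place} in $O(1)$ rounds with $L=n$ --- is never actually constructed. You diagnose the tension accurately (remasking destroys information, there is no scratch, a serial sweep costs $\Theta(n)$ rounds) and propose to resolve it by injecting ``a constant amount of fresh randomness at a seam'' and then compensating, but you stop at ``engineering this compensation \dots\ is the delicate part''. That delicate part \emph{is} the proof. The obvious candidate schedules all fail: remasking all even (or all odd) cells erases exactly the $y$-values the next write needs, and those values are not recoverable from any bounded window of already-finalized $z$'s, since $y^{i}=z^{1}\oplus\cdots\oplus z^{i}$ is a parity of $\Theta(i)$ bits. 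Your ``seam'' heuristic does not obviously help either, because the dependency structure here is not a single cycle that one cut would break but a length-$n$ chain in which every interior cell needs a value the preceding remask destroys; to make this a proof you would have to specify concretely what is written at the seam, how the two sides are processed in $O(1)$ rounds each, and why the compensation is itself $\AC^{0}$.

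The paper avoids the in-place conversion altogether by changing the target of the prefix-XOR trick. Rather than sampling $y\in\{0,1\}^{n}$ and converting to $z$ on the same $n$ cells, it runs your trick at half scale with built-in scratch: write $y^{1},\dots,y^{n/2}$ at the odd positions and then compute $x^{i}=y^{i}\oplus y^{i-1}$ at the even positions, so that $(x^{1},\dots,x^{n/2})\sim\calDpar_{n/2}$ sits on the even cells with no overwriting at all; now remask the odd cells. What remains is purely local: interpret each $x^{i}$ as the intended parity of the $i$-th $2$-block, sample a fresh bit at the odd cell of that block, and if the resulting pair has the wrong parity, remask and rewrite the even cell --- a constant number of extra rounds using only fan-in-$2$ gates. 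In short, the paper manufactures $n/2$ cells of scratch by halving the problem and then spends $O(1)$ rounds expanding each parity bit into a uniformly random pair; your scheme attempts the full-scale conversion with no scratch, which is exactly the step you were unable to finish.
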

\begin{proof}
    In the following, we describe the sampling algorithm while also explaining how to implement it using an $\AC^0$ predictor $p$ and an unmask position predictor $\gF$. Since here our predictor $p$ and $\gF$ can also depend on the step number in the generation process, at each step we can simply use a new $p$ and $\gF$.

    Suppose for simplicity that $n$ is even. We will proceed in two stages, each consisting of $O(1)$ steps. First, we partition the $n$-bit input into $n/2$ $2$-bit blocks consecutively. In the first stage, at each even position $2i$, we will sample a bit $x^{i}$ such that the parity of the positions at $2i-1$ and $2i$ should be $x^{i}$. Note that all these $x^{i}$ should follow the distribution $\calDpar_{n/2}$. Next in the second stage, we will sample each $2$-bit block (i.e.,~$(2i-1,2i)$ for $i=1,\dots,n/2$) uniformly at random such that its parity is $x^{i}$. This gives us the desired distribution $\calDpar_{n}$.

    \medskip\noindent
    \textbf{Stage 1. Sampling the parity of $n/2$ pairs.} For this stage, we will proceed in two steps. In the first step, at all odd positions, we generate $y^1,y^2,\dotsc,y^{n/2}$ such that $(y^1,\dotsc,y^{n/2-1})$ is uniformly random and $y^{n/2} = 0$. Then, in the second step, at all even positions, we compute $x^i = y^i \oplus y^{i-1}$ for $i \in \{2,\dotsc,n/2\}$, and $x^1 = y^1$. After these two steps, we have generated $(y^1,x^1,y^2,x^2,\dotsc,y^{n/2},x^{n/2})$. The first two steps can be implemented using $\AC^0$ predictor $p$ and $\gF$ easily.

    Now, following the proof of Theorem~\ref{thm:revision-ac0}, $(x^1,\dotsc,x^{n/2})$ follows the distribution $\calDpar_{n/2}$, meaning that they are uniformly random conditioning on having even parity.

    \medskip\noindent
    \textbf{Stage 2. Sampling all the $n/2$ pairs given their parities.} Next, we remask all odd positions (all the $y^i$'s). Recall that we interpret $x^i$ as the parity of the $i$-th $2$-bit block. The goal from now on is to sample the $i$-th $2$-bit block $(2i-1,2i)$ uniformly at random such that its parity is $x^{i}$, which gives us the desired distribution $\calDpar_{n}$.
    
    Concretely, for each block of the form $\m0$, we will replace it with a new string from $\{00,11\}$ with equal probability. For each block of the form $\m1$, we will replace it with a new string from $\{01,10\}$ with equal probability. We will do it in two sub-stages, each with several steps.

    \medskip\noindent
    \textbf{Stage 2.a. Dealing with blocks of the form $\m0$.} In the first sub-stage, for all blocks of the form M0, we sample the first bit randomly. Then, for every block of the form $10$, we first remask the $0$ to get $1\m$, and then change the last $\m$ to $1$ to get $11$. In this way, each $\m0$ block is replaced by $00$ or $11$ with equal probability as desired.

    \medskip\noindent
    \textbf{Stage 2.b. Dealing with blocks of the form $\m1$.} In the second sub-stage, for all blocks of the form $\m1$, we sample the first bit randomly. Then, for every block of the form $11$, we first remask the second 1 to get $1\m$, and then change the last $\m$ to $0$ to get $10$. In this way, each $\m1$ block is replaced by $01$ or $10$ with equal probability as desired.

    It is not hard to see that the two sub-stages can be implemented using an $\AC^0$ predictor $p$ and $\gF$ easily if the predictor and unmask position predictor can also depend on the step number in the generation process. This completes the proof.
\end{proof}

\subsection{Lower Bound}
In this part, we show that DLMs without remasking or revision cannot generate $\calDpar_{n}$ in $O(1)$ steps even if we allow the predictor $p$ and $\gF$ to be $\AC^0$ circuits. In this way, we establish a strong separation between DLMs with and without revision/remasking.

This result should come across as intuitive, given that $\Parity$ is not in $\mathsf{AC}^0$ as mentioned before. During DLM generation, one can only modify mask tokens. When there is only one mask token left, the DLM has to calculate the parity of all other bits, which is not possible. However, formalizing this intuition precisely is challenging because DLM can decode multiple tokens simultaneously in each iteration. As a result, we need an average-case lower bound for parity against $\AC^0$ as introduced below.

\begin{theorem}[\cite{Hastad14a}]\label{thm:parity-ac0}
    Let $d$ be a constant, and $C \colon \{0,1\}^{n} \to \{0,1\}$ be an $\AC^0$ circuit of depth $d$ and size $S$. There exists a constant $c_d$ that only depends on $d$ such that
    \[
    \Pr_{\chi \sim U^{n}}[C(\chi) = \Parity(\chi)] < 1/2 + 2^{-c_d \cdot n / (\log S)^{d-1}}.
    \]
\end{theorem}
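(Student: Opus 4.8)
The plan is to prove \Cref{thm:parity-ac0} by the method of random restrictions, following H\r{a}stad's approach to average-case lower bounds against constant-depth circuits. The central tool is the H\r{a}stad switching lemma: if $g$ is computed by a width-$t$ DNF (or CNF) and $\rho$ is a random restriction that independently leaves each variable free with probability $p$ and otherwise sets it to $0$ or $1$ with equal probability, then the probability that the restricted function $g_\rho$ has no decision tree of depth less than $s$ is at most $(5pt)^s$. Since a depth-$<s$ decision tree can be written both as a width-$<s$ DNF and a width-$<s$ CNF, one application of the lemma lets us swap the order of the bottom two gate levels of an $\AC^0$ circuit while simultaneously shrinking the bottom fan-in, which collapses two adjacent levels into one.

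First I would put $C$ in normal form: push all negations to the inputs and write it as a layered, alternating AND/OR circuit of depth $d$ and size $S$, with a preliminary restriction (the standard bottom-fan-in reduction) ensuring the bottom gates have fan-in $O(\log S)$. Then I would apply $d-1$ rounds of random restrictions, in round $i$ using survival probability $p_i = \Theta(1/\log S)$ and a decision-tree depth budget $s = \Theta\!\big(n/(\log S)^{d-1}\big)$. In each round, every one of the $\le S$ bottom subcircuits is a width-$O(\log S)$ DNF or CNF, so by the switching lemma together with a union bound over the $\le S$ gates, with probability at least $1 - S\cdot(5 p_i t)^{s} \ge 1 - S\cdot 2^{-\Omega(s)}$ every bottom subcircuit becomes a depth-$<s$ decision tree; rewriting each in the opposite normal form merges it with the level above, dropping the circuit depth by one. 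After $d-1$ rounds the composed restriction $\rho$ makes $C_\rho$ a single decision tree of depth less than $s$, and the total failure probability over all rounds is at most $d\cdot S\cdot 2^{-\Omega(s)} \le 2^{-c_d\, n/(\log S)^{d-1}}$ for an appropriate constant $c_d = c_d(d)$.

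Next I would control the surviving variables and invoke the hardness of parity. Because the restrictions are independent and balanced, a Chernoff bound gives that, except with probability $2^{-\Omega(n\prod_i p_i)} = 2^{-\Omega(s)}$, at least $\tfrac12 n\prod_i p_i > s$ variables are left free by $\rho$; I add this event to the failure budget. Balancedness also means that $\Parity$ restricted by $\rho$ is still $\Parity$ (up to a fixed additive constant) on the free coordinates. A decision tree of depth less than $s$ queries fewer than $s$ variables along any root-to-leaf path, hence leaves at least one free parity variable unread; conditioned on $\rho$ lying in the good event, over a uniformly random completion of $\rho$ the output of $C_\rho$ is independent of that unread variable, so $C_\rho$ agrees with $\Parity$ with probability exactly $1/2$.

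Finally I would average over $\rho$: if $E$ denotes the bad event (some bottom subcircuit fails to switch in some round, or fewer than $s$ variables survive), then $\Pr[E] \le 2^{-c_d\, n/(\log S)^{d-1}}$, while on $\bar E$ the conditional agreement probability equals $1/2$; hence $\Pr_{\chi\sim U^n}[C(\chi) = \Parity(\chi)] \le \tfrac12\Pr[\bar E] + \Pr[E] \le \tfrac12 + 2^{-c_d\, n/(\log S)^{d-1}}$, as claimed. The main obstacle is the quantitative bookkeeping across the $d$ levels: one must choose the $p_i$ and $s$ so that, simultaneously, the bottom width stays $O(\log S)$ in every round, the union bound over the $\le S$ gates is absorbed into $2^{-\Omega(s)}$, and $s$ can still be taken as large as $\Theta(n/(\log S)^{d-1})$ --- it is precisely this optimization, losing one factor of $\log S$ per switching round, that produces the $(\log S)^{d-1}$ in the exponent. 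A secondary subtlety is handling the bottom layer (reducing its fan-in without wasting an extra $\log S$ factor) and ensuring that the ``switches succeed'' and ``enough survivors'' events are defined over the same restriction so that the final averaging is clean.
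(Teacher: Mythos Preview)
The paper does not prove this statement; it is quoted as a known result from \cite{Hastad14a} and used as a black box (together with its immediate probabilistic corollary) in the proof of \Cref{thm:impossible}. So there is no ``paper's own proof'' to compare against.

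That said, your proposal is the standard and correct route to this bound---H\r{a}stad's random-restriction/switching-lemma argument is exactly how the cited reference establishes it. The outline you give (normalize the circuit, apply $d-1$ rounds of random restrictions with survival probability $\Theta(1/\log S)$, union-bound over the $\le S$ gates at each round, collapse to a depth-$<s$ decision tree with $s=\Theta(n/(\log S)^{d-1})$, then use that such a tree has correlation exactly $0$ with parity whenever more than $s$ variables survive) is faithful to the source, and your identification of the two places that need care---the initial bottom-fan-in reduction and the simultaneous control of ``all switches succeed'' and ``enough variables survive''---is accurate. If you actually carry out the bookkeeping, the only point to watch is that the strict inequality in the statement (rather than $\le$) comes for free once you note the bad-event probability is strictly less than the stated bound for all nontrivial $n,S$.
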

In plain text, this theorem says that any $\AC^0$ circuit cannot calculate $\Parity$ better than random guessing by too much. Furthermore, this theorem can be directly extended to probabilistic $\AC^0$ circuits as follows.
\begin{corollary}\label{cor:parity-ac0-prob}
    Let $d$ be a constant. Let $C \colon \{0,1\}^{n} \times \{0,1\}^{m} \to \{0,1\}$ be a probabilistic $\AC^0$ circuit of depth $d$ and size $S$. There exists a constant $c_d$ that only depends on $d$ such that
    \[
    \Pr_{\chi \sim U^{n}, R \sim U^{m}}[C(\chi, R) = \Parity(\chi)] < 1/2 + 2^{-c_d \cdot n / (\log S)^{d-1}}.
    \]
\end{corollary}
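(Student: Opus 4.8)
The plan is to derive \Cref{cor:parity-ac0-prob} from \Cref{thm:parity-ac0} by a one-line averaging argument over the random string. First I would fix the randomness: for each $\rho\in\{0,1\}^m$ let $C_\rho\colon\{0,1\}^n\to\{0,1\}$ be the deterministic circuit obtained from $C$ by hardwiring its $m$ random-input nodes to the bits of $\rho$. Hardwiring input nodes to constants does not increase the depth, does not increase the number of gates, and keeps every gate an unbounded-fan-in $\mathsf{AND}/\mathsf{OR}/\mathsf{NOT}$ gate (a constant input to such a gate either trivializes it or turns it into a pass-through, and the literal constants $0,1$ can be reintroduced as $\eta\wedge\lnot\eta$ and $\eta\vee\lnot\eta$ if one insists on a gate model without constant inputs, exactly as done earlier for $\mathsf{ShiftR}$). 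Hence each $C_\rho$ is a genuine $\AC^0$ circuit of depth at most $d$ and size at most $S$.

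Second, I would apply \Cref{thm:parity-ac0} to each $C_\rho$ individually. Since $\log S'\le\log S$ whenever $S'\le S$ (we may assume $S\ge 2$, the statement being trivial otherwise), we get $2^{-c_d\cdot n/(\log S')^{d-1}}\le 2^{-c_d\cdot n/(\log S)^{d-1}}$; that is, the deterministic error bound is monotone nondecreasing in the size parameter, so the same constant $c_d$ works for every $C_\rho$ with the clean right-hand side $1/2+2^{-c_d\cdot n/(\log S)^{d-1}}$. Thus $\Pr_{\chi\sim U^n}[C_\rho(\chi)=\Parity(\chi)]<1/2+2^{-c_d\cdot n/(\log S)^{d-1}}$ for every fixed $\rho$.

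Finally, I would average over $\rho$. Because $\chi$ and $R$ are independent, the law of total probability gives
\[
\Pr_{\chi\sim U^n,\,R\sim U^m}\!\big[C(\chi,R)=\Parity(\chi)\big]=\E_{\rho\sim U^m}\Big[\Pr_{\chi\sim U^n}\!\big[C_\rho(\chi)=\Parity(\chi)\big]\Big]<1/2+2^{-c_d\cdot n/(\log S)^{d-1}},
\]
since an average of quantities each bounded by the same value is bounded by that value. This is exactly the claimed inequality, with the same $c_d$ as in \Cref{thm:parity-ac0}. I do not anticipate any genuine obstacle here: the only step deserving an explicit sentence is the observation that fixing the random bits yields an $\AC^0$ circuit of no larger depth or size, so that \Cref{thm:parity-ac0} applies verbatim; everything else is bookkeeping.
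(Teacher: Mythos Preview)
Your proposal is correct and is exactly the standard averaging argument the paper has in mind: the paper does not spell out a proof but simply says the deterministic bound ``can be directly extended to probabilistic $\AC^0$ circuits,'' which is precisely the fix-the-randomness-then-average step you wrote down. The only things worth noting are already in your write-up (hardwiring random inputs preserves depth and does not increase size, and the bound is monotone in $S$), so there is nothing to add.
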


Now we are ready to present the lower bound for DLM generations.
\begin{theorem}\label{thm:impossible}
    DLMs (without remasking or revision) with $L=n$ cannot sample $\calDpar_{n}$ in $O(1)$ steps when the predictor $p(\cdot|x)$ and $\gF$ are both in $\AC^0$. 
\end{theorem}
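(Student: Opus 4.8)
The plan is to argue by contradiction: suppose a DLM with $\mathsf{AC}^0$ predictor $p$ and $\mathsf{AC}^0$ unmasking policy $\gF$ samples $\calDpar_n$ in $D = O(1)$ rounds with $L = n$. The key structural observation is that since no remasking or revision is allowed, once a token is unmasked it is frozen forever, and the final output $z = x_D$ is obtained by filling in, over $D$ rounds, a set of positions determined only by the mask pattern. I would first reduce to the last round. Just before the final round, we have a sequence $x_{D-1} \in (\gV \cup \{\m\})^n$ on which the parity of the already-decoded bits is some fixed value, and $\gF$ selects a set $\gS$ of still-masked positions (in fact the last round fills \emph{all} remaining masks) to be decoded independently. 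Since $z \sim \calDpar_n$ has even parity with probability $1$, the bits decoded in the last round must, conditioned on $x_{D-1}$, deterministically equal the parity of the bits already present (or more precisely, the XOR of the last-round bits must equal that parity almost surely).

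The main idea is then: among the positions not decoded before the last round, consider any single one, say position $i$, with the intent of isolating it so that it must ``compute parity.'' The cleanest route is to track a \emph{fixed bit}. I would argue that there is a position $i^*$ and a partial assignment / event of non-negligible probability under which, at round $D-1$, position $i^*$ is the unique remaining mask (or at least: the value placed there in the final round is forced to equal the parity of the other $n-1$ bits, which are by then uniform or close to uniform on a large sub-cube). Concretely, since $D$ is constant and each round's decoded set is an $\mathsf{AC}^0$ function of the current (encoded) sequence, one can restrict attention to a typical decoding schedule: by averaging there is a fixed schedule $\gS_1, \dots, \gS_D$ of positions decoded in each round that occurs with probability at least $D^{-O(1)}$ over the randomness, or — better — one just notes that $\gF$ is deterministic, so the \emph{schedule is a deterministic function of the partially-revealed bits}, and along any root-to-leaf path of the decoding the last-round set $\gS_D$ is determined. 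Pick the position $i^*$ that is last to be placed along some branch; conditioning on reaching that branch, the value at $i^*$ is produced by a single $\mathsf{AC}^0$ circuit $C^{i^*}(\chi_{D-1}, R^{i^*})$ that must agree with $\Parity$ of the other $n-1$ coordinates with probability $1$ (for the output to have even parity), where those $n-1$ coordinates are distributed uniformly (or nearly uniformly) over a set of size $\geq 2^{n-1}/D^{O(1)}$.

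From here I invoke Corollary~\ref{cor:parity-ac0-prob}: a probabilistic $\mathsf{AC}^0$ circuit of size $S = \poly(n)$ and constant depth agrees with $\Parity$ on $n-1$ uniform bits only with probability $1/2 + 2^{-\Omega(n/\mathrm{polylog}(n))}$. Even after accounting for the $D^{O(1)} = \mathrm{polylog}$-or-constant loss from conditioning on the branch (and for the mild non-uniformity, which one can absorb by a hybrid argument or by observing the conditioning set is a sub-cube on which parity of the free coordinates is still balanced), this success probability is bounded away from $1$ for large $n$, contradicting the requirement that $z$ has even parity almost surely.

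The step I expect to be the main obstacle is making precise the claim that ``some coordinate is forced to compute the parity of a large uniform block.'' The difficulty is that the DLM may decode many tokens in the last round simultaneously and may never isolate a single mask; the parity constraint is a \emph{global} constraint on the XOR of the last-round bits, not obviously a constraint forcing any one circuit $C^i$ to compute parity. To handle this I would use the conditional independence of the last-round outputs: conditioned on $x_{D-1}$, the bits $\{z^i : i \in \gS_D\}$ are independent, so the event that their XOR equals a fixed bit with probability $1$ forces \emph{all but one} of them to be constant (probability-$1$ deterministic) functions of $x_{D-1}$ and the remaining one to equal that fixed bit XOR the constants — i.e., effectively a deterministic $\mathsf{AC}^0$ function of $x_{D-1}$. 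Then I reveal, round by round backwards, that each earlier-decoded coordinate is likewise (on a typical branch) an $\mathsf{AC}^0$ function of strictly earlier bits, so composing a constant number of $\mathsf{AC}^0$ layers yields a single $\mathsf{AC}^0$ circuit (of $\poly(n)$ size, constant depth) that outputs, deterministically, a string whose parity is always $0$ — equivalently, one coordinate of the output equals the $\mathsf{AC}^0$-computable-parity of the others, while the whole string is uniform on $\calDpar_n$, forcing that coordinate to genuinely compute parity of $n-1$ near-uniform bits. Pinning down the near-uniformity and the branch-probability bookkeeping is the technical heart; the final contradiction is then immediate from Corollary~\ref{cor:parity-ac0-prob}.
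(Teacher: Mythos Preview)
There are two concrete gaps. First, the backward unrolling ``each earlier-decoded coordinate is likewise an $\AC^0$ function of strictly earlier bits'' is simply false: predictors in rounds $1,\dots,D-1$ consume fresh random bits, so those coordinates are genuinely random given the earlier ones. Only the last round is forced to be deterministic by your XOR-of-independents argument. Second, the inference ``composing $O(1)$ many $\AC^0$ layers yields an $\AC^0$ circuit whose output always has parity zero --- equivalently, one coordinate equals the $\AC^0$-computable parity of the others'' is a non sequitur: an $\AC^0$ \emph{sampler} for $\calDpar_n$ does exist (Theorems~\ref{thm:revision-ac0} and~\ref{thm:remasking-ac0} build one), and in those constructions no output coordinate is an $\AC^0$ function of the remaining output coordinates. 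What distinguishes the no-revision DLM is that each last-round bit must be an $\AC^0$ function of the \emph{other output bits themselves}; to exploit this you would need to show $|\gS_D|=1$ almost surely and then merge the $n$ position-wise predictors into one circuit via the $\AC^0$-computable schedule --- neither of which your conditioning-on-a-branch route supplies, and conditioning in any case destroys the uniformity you need to invoke H{\aa}stad.

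The paper proceeds quite differently: it peels off the \emph{first} round and inducts forward on the number of rounds. With $m=n-|\gS_1|$ masks remaining after round~1, it builds an $\AC^0$ circuit that approximates $\Parity$ on the $|\gS_1|$ first-round bits with advantage $2^{-\Theta(m)}$: plug an arbitrary $\chi$ into the $\gS_1$ positions, simulate the remaining $D-1$ rounds, and output $0$ if all $m$ completed bits come out $a$ (otherwise output a random bit). Corollary~\ref{cor:parity-ac0-prob} then forces $m\ge n^{\Omega(1)}$. Fixing the $\gS_1$ bits to zero yields a $(D-1)$-round DLM sampling $\calDpar_m$, and iterating $O(1)$ times lands at a $0$-round sampler for a nontrivial $\calDpar$, a contradiction. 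The creative move you are missing is to use the DLM's later rounds as a \emph{weak parity tester} via the all-zero event, rather than trying to isolate a single parity-computing coordinate.
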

\begin{proof}
    Assume for the sake of contradiction that for any integer $n$, there exists a DLM that samples $\calDpar_{n}$ in $\ell$ steps such that the predictor $p$ and $\gF$ are both in $\AC^0$.
    
    We first set up some notations. Let $\ell = \Theta(1)$ be an arbitrary constant. We define the size of the DLM to be the maximum circuit size of the predictor $p$ (for all $i$) and $\gF$. Since both the predictor $p$ and $\gF$ are in $\AC^0$, the size of a DLM is upper bounded by $n^c$ for some constant $c > 1$, as long as $n$ is sufficiently large.

    \medskip\noindent
    \textbf{Reduction to an Auxiliary Claim.} Under the assumption above, we can prove the following claim, which is instrumental for proving our theorem:
    \begin{center}
        \emph{For every $i \in \{0,1,\dotsc,\ell\}$, there exists an $O\Sp{n^{c}}$-size DLM with $L=n_i$ that samples $\calDpar_{n_i}$ in $\ell - i$ steps, where $n_i \ge n^{2^{-i}}$.}
    \end{center}
    Before proving the claim, we first show how this claim helps us finish the proof. This statement implies that an $n^{c}$-size DLM can sample $\calDpar_{n^{2^{-\ell}}}$ in $0$ step, a clear contradiction since one cannot sample any token in $0$ steps.
    
    \medskip\noindent
    \textbf{Notation for the Claim Proof. }Let us now show how to prove the claim by induction. The case when $i = 0$ follows directly from our assumption. Now, assume that the claim holds for $i$, and we prove it for $i + 1$. Here we use similar notations as the proof of~\Cref{thm:cot}. Let $\gS_1 = \gF(\m^{n_i})$ be the set of positions to be unmasked in the first iteration, and $x_2^{\gS_1} \in \{a,b\}^{|\gS_1|}$ be the sequence of sampled tokens in the first iteration. Since the marginal distribution of $\calDpar_{n_i}$ on $\gS_1$ is $U^{|\gS_1|}$, and a DLM without revision or remasking cannot change unmasked tokens, we must have $x_2^{\gS_1}\sim U^{|\gS_1|}$. Furthermore, let $m = n_i - |\gS_1|$ be the number of masked tokens in $x_2$. Since $U^{n_i}\neq\calDpar_{n_i}$, we must have $m>0$.
    
    \medskip\noindent
    \textbf{High-Level idea of the Claim Proof. }Before going into the details, here we introduce the high-level idea. The central argument is to show that $m \ge \sqrt{n_i}$ for sufficiently large $n_i$. To prove this, we will show that we can construct an $\AC^0$ circuit that approximates $\Parity\Sp{\chi}$ for $\chi\in\Bp{0,1}^{|\gS_1|}$, and the approximation becomes more accurate as $m$ is smaller. The circuit lower bound in~\Cref{cor:parity-ac0-prob} would then give us a lower bound on $m$ as an $\AC^0$ circuit cannot approximate $\Parity$ very well. Besides, we can construct a DLM that samples $\calDpar_{m}$ in $\ell-i-1$ steps using a DLM that samples $\calDpar_{n_i}$ in $\ell-i$ steps. Combined with $m \ge \sqrt{n_i}$ we finish the induction step.

    \medskip\noindent
    \textbf{Key Construction in the Claim Proof.} The construction of a circuit $C$ that approximates $\Parity\Sp{\chi}$ for $\chi\in\Bp{0,1}^{|\gS_1|}$ is as follows:
    \begin{itemize}
        \item Set the positions from set $\gS_1$ in $x_2$ by $\chi$ using the correspondence $0\mapsto a,1\mapsto b$. Set other positions of $x_2$ to be $\m$.
        \item Simulate the DLM starting from $x_2$ for $\ell - i - 1$ steps and obtain output tokens on $[n_i] \setminus \gS_1$.
        \item The DLM outputs sequence $x_{\ell-i}$. Circuit $C$ Output $0$ if $x^{[n_i] \setminus \gS_1}_{\ell-i}$ are all $a$, and output a random bit otherwise.
    \end{itemize}
    This circuit is in $\AC^0$ because every iteration we apply an $\AC^0$ circuit on $x$ and $\ell=O(1)$. Besides, outputting whether the outputs are all $a$ can be realized by an $\mathsf{AND}$ gate with unbounded fan-ins. Since parity of the output sequence $\Parity(x_{l-i})=0$ by definition of $\calDpar$, we know $\Parity\Sp{x_{\ell-i}^{\gS_1}}=\Parity\Sp{x_{\ell-i}^{[n_i]\setminus\gS_1}}$. As a result, the constructed circuit calculates parity slightly better than a random guess on average by precisely predicting the parity in the case when $x_{\ell-i}^{[n_i]\setminus\gS_1}$ is all $a$, as we must have $\Parity\Sp{x^{\gS_1}}=0$.

    Now we calculate the probability that $C$ computes parity correctly over input $\chi\sim U^{|\gS_1|}$. It is easy to verify that the distribution of $x^{[n_i]\setminus\gS_1}$ conditioned on $x^{\gS_1}$ with parity zero is $\calDpar_{n_i-|\gS_1|}$. Hence the probability of $x^{[n_i]\setminus\gS_1}_{\ell-i}$ being all $a$ is $2^{-m}$. Therefore the probability that $C$ outputs the parity correctly is
    \[
    2^{-m} \cdot 1 + (1 - 2^{-m}) \cdot \frac{1}{2} = \frac{1}{2} + 2^{-(m+1)}.
    \]

    Note that $C$ has constant depth and size $n_i^c$, applying~\Cref{cor:parity-ac0-prob} we know that we must have
    \begin{align*}
        \frac12+2^{-(m+1)}\leq\frac12+2^{-\frac{c_d n_i}{\Sp{\log n_i^c}^{d-1}}}\Leftrightarrow m>\frac{c_d (n_i-m)}{c^{d-1}\Sp{\log (n_i-m)}^{d-1}}-1.
    \end{align*}
    For sufficiently large $n_i$, we have $m>\sqrt{n_i}$.\footnote{Since the induction ends in $\ell$ steps, a sufficiently large $n_i$ can always be realized by a sufficiently large $n$.} Finally, since the DLM samples perfectly from $\calDpar_{n_i}$, when setting $w_1$ to be all-zero, it samples $\calDpar_{m}$ over the positions in $[n_i] \setminus S_1$. In this way, we obtain a DLM that samples from $\calDpar_{m}$ in $\ell - i - 1$ steps, completing the induction.
\end{proof}
\noindent\emph{Remark.} The proof of this theorem includes a procedure that converts a constant-round generation of an $\mathsf{AC}^0$ DLM to an $\mathsf{AC}^0$ circuit. One might ask the following question: Why cannot we use the same procedure to obtain an $\mathsf{AC}^0$ circuit from the construction in the proofs of~\Cref{thm:revision-ac0,thm:remasking-ac0} that samples $\calDpar_n$ in a single step? Does not this show that DLM without revision/remasking could sample $\calDpar_n$ in a single step? Indeed, one could obtain such an $\mathsf{AC}^0$ circuit, but such an $\mathsf{AC}^0$ circuit is not a valid DLM. A valid DLM predictor $p$ satisfies $p(x|x_t)=\prod_{i=1}^Lp^i\Sp{x^i\middle|x_t}$, so it is impossible to sample a distribution with correlation within a single step.

Our theoretical analysis advocates that designing a forward process that includes revision and collecting training data that contains revision/remasking is a very important design component for DLMs and holds greater potential than DLMs without revision or remasking.

\section{Conclusion}
In this paper, we establish a theoretical foundation for diffusion language models (DLMs) as parallel samplers, showing that with CoT they achieve the optimal number of sequential steps, in contrast to autoregressive models whose cost grows with circuit size. Furthermore, with remasking or revision, they can simultaneously achieve the optimal space requirement. Additionally, we prove that remasking and revision not only reduce the memory requirement so that it scales with circuit width but also strictly expand expressivity, enabling constant-step sampling of distributions such as parity that standard DLMs cannot realize. These results position DLMs as the most efficient parallel samplers, and highlight remasking and revision as essential mechanisms for unlocking their full potential.

\bibliography{reference}

@article{kim2025train,
  title={Train for the worst, plan for the best: Understanding token ordering in masked diffusions},
  author={Kim, Jaeyeon and Shah, Kulin and Kontonis, Vasilis and Kakade, Sham and Chen, Sitan},
  journal={arXiv preprint arXiv:2502.06768},
  year={2025}
}

@inproceedings{li2024chain,
title={Chain of Thought Empowers Transformers to Solve Inherently Serial Problems},
author={Zhiyuan Li and Hong Liu and Denny Zhou and Tengyu Ma},
booktitle={The Twelfth International Conference on Learning Representations},
year={2024},
url={https://openreview.net/forum?id=3EWTEy9MTM}
}

@InProceedings{sohl2015deep,
  title = 	 {Deep Unsupervised Learning using Nonequilibrium Thermodynamics},
  author = 	 {Sohl-Dickstein, Jascha and Weiss, Eric and Maheswaranathan, Niru and Ganguli, Surya},
  booktitle = 	 {Proceedings of the 32nd International Conference on Machine Learning},
  pages = 	 {2256--2265},
  year = 	 {2015},
  editor = 	 {Bach, Francis and Blei, David},
  volume = 	 {37},
  series = 	 {Proceedings of Machine Learning Research},
  address = 	 {Lille, France},
  month = 	 {07--09 Jul},
  publisher =    {PMLR},
  pdf = 	 {http://proceedings.mlr.press/v37/sohl-dickstein15.pdf},
  url = 	 {https://proceedings.mlr.press/v37/sohl-dickstein15.html},
  abstract = 	 {A central problem in machine learning involves modeling complex data-sets using highly flexible families of probability distributions in which learning, sampling, inference, and evaluation are still analytically or computationally tractable. Here, we develop an approach that simultaneously achieves both flexibility and tractability. The essential idea, inspired by non-equilibrium statistical physics, is to systematically and slowly destroy structure in a data distribution through an iterative forward diffusion process. We then learn a reverse diffusion process that restores structure in data, yielding a highly flexible and tractable generative model of the data. This approach allows us to rapidly learn, sample from, and evaluate probabilities in deep generative models with thousands of layers or time steps, as well as to compute conditional and posterior probabilities under the learned model. We additionally release an open source reference implementation of the algorithm.}
}

@inproceedings{ho2020denoising,
 author = {Ho, Jonathan and Jain, Ajay and Abbeel, Pieter},
 booktitle = {Advances in Neural Information Processing Systems},
 editor = {H. Larochelle and M. Ranzato and R. Hadsell and M.F. Balcan and H. Lin},
 pages = {6840--6851},
 publisher = {Curran Associates, Inc.},
 title = {Denoising Diffusion Probabilistic Models},
 url = {https://proceedings.neurips.cc/paper_files/paper/2020/file/4c5bcfec8584af0d967f1ab10179ca4b-Paper.pdf},
 volume = {33},
 year = {2020}
}

@inproceedings{austin2021structured,
 author = {Austin, Jacob and Johnson, Daniel D. and Ho, Jonathan and Tarlow, Daniel and van den Berg, Rianne},
 booktitle = {Advances in Neural Information Processing Systems},
 editor = {M. Ranzato and A. Beygelzimer and Y. Dauphin and P.S. Liang and J. Wortman Vaughan},
 pages = {17981--17993},
 publisher = {Curran Associates, Inc.},
 title = {Structured Denoising Diffusion Models in Discrete State-Spaces},
 url = {https://proceedings.neurips.cc/paper_files/paper/2021/file/958c530554f78bcd8e97125b70e6973d-Paper.pdf},
 volume = {34},
 year = {2021}
}

@inproceedings{he2023diffusionbert,
  title={DiffusionBERT: Improving Generative Masked Language Models with Diffusion Models},
  author={He, Zhengfu and Sun, Tianxiang and Tang, Qiong and Wang, Kuanning and Huang, Xuan-Jing and Qiu, Xipeng},
  booktitle={Proceedings of the 61st Annual Meeting of the Association for Computational Linguistics (Volume 1: Long Papers)},
  pages={4521--4534},
  year={2023}
}

@article{Hastad14a,
  author       = {Johan H{\aa}stad},
  title        = {On the Correlation of Parity and Small-Depth Circuits},
  journal      = {{SIAM} J. Comput.},
  volume       = {43},
  number       = {5},
  pages        = {1699--1708},
  year         = {2014},
  url          = {https://doi.org/10.1137/120897432},
  doi          = {10.1137/120897432},
  timestamp    = {Wed, 14 Nov 2018 10:45:06 +0100},
  biburl       = {https://dblp.org/rec/journals/siamcomp/Hastad14a.bib},
  
bibsource    = {dblp computer science bibliography, https://dblp.org}
}

@inproceedings{lou2024discrete,
  title={Discrete diffusion modeling by estimating the ratios of the data distribution},
  author={Lou, Aaron and Meng, Chenlin and Ermon, Stefano},
  booktitle={Proceedings of the 41st International Conference on Machine Learning},
  pages={32819--32848},
  year={2024}
}

@inproceedings{sahoo2024simple,
 author = {Sahoo, Subham Sekhar and Arriola, Marianne and Schiff, Yair and Gokaslan, Aaron and Marroquin, Edgar and Chiu, Justin T and Rush, Alexander and Kuleshov, Volodymyr},
 booktitle = {Advances in Neural Information Processing Systems},
 editor = {A. Globerson and L. Mackey and D. Belgrave and A. Fan and U. Paquet and J. Tomczak and C. Zhang},
 pages = {130136--130184},
 publisher = {Curran Associates, Inc.},
 title = {Simple and Effective Masked Diffusion Language Models},
 url = {https://proceedings.neurips.cc/paper_files/paper/2024/file/eb0b13cc515724ab8015bc978fdde0ad-Paper-Conference.pdf},
 volume = {37},
 year = {2024}
}

@article{song2025seed,
  title={Seed diffusion: A large-scale diffusion language model with high-speed inference},
  author={Song, Yuxuan and Zhang, Zheng and Luo, Cheng and Gao, Pengyang and Xia, Fan and Luo, Hao and Li, Zheng and Yang, Yuehang and Yu, Hongli and Qu, Xingwei and others},
  journal={arXiv preprint arXiv:2508.02193},
  year={2025}
}

@article{nie2025large,
  title={Large language diffusion models},
  author={Nie, Shen and Zhu, Fengqi and You, Zebin and Zhang, Xiaolu and Ou, Jingyang and Hu, Jun and Zhou, Jun and Lin, Yankai and Wen, Ji-Rong and Li, Chongxuan},
  journal={arXiv preprint arXiv:2502.09992},
  year={2025}
}

@article{yang2025mmada,
  title={Mmada: Multimodal large diffusion language models},
  author={Yang, Ling and Tian, Ye and Li, Bowen and Zhang, Xinchen and Shen, Ke and Tong, Yunhai and Wang, Mengdi},
  journal={arXiv preprint arXiv:2505.15809},
  year={2025}
}

@article{ye2025dream,
  title={Dream 7b: Diffusion large language models},
  author={Ye, Jiacheng and Xie, Zhihui and Zheng, Lin and Gao, Jiahui and Wu, Zirui and Jiang, Xin and Li, Zhenguo and Kong, Lingpeng},
  journal={arXiv preprint arXiv:2508.15487},
  year={2025}
}

@misc{xie2025dreamcoder7bopendiffusion,
      title={Dream-Coder 7B: An Open Diffusion Language Model for Code}, 
      author={Zhihui Xie and Jiacheng Ye and Lin Zheng and Jiahui Gao and Jingwei Dong and Zirui Wu and Xueliang Zhao and Shansan Gong and Xin Jiang and Zhenguo Li and Lingpeng Kong},
      year={2025},
      eprint={2509.01142},
      archivePrefix={arXiv},
      primaryClass={cs.CL},
      url={https://arxiv.org/abs/2509.01142}, 
}

@article{khanna2025mercury,
  title={Mercury: Ultra-fast language models based on diffusion},
  author={Khanna, Samar and Kharbanda, Siddhant and Li, Shufan and Varma, Harshit and Wang, Eric and Birnbaum, Sawyer and Luo, Ziyang and Miraoui, Yanis and Palrecha, Akash and Ermon, Stefano and others},
  journal={arXiv preprint arXiv:2506.17298},
  year={2025}
}

@article{wei2022chain,
  title={Chain-of-thought prompting elicits reasoning in large language models},
  author={Wei, Jason and Wang, Xuezhi and Schuurmans, Dale and Bosma, Maarten and Xia, Fei and Chi, Ed and Le, Quoc V and Zhou, Denny and others},
  journal={Advances in neural information processing systems},
  volume={35},
  pages={24824--24837},
  year={2022}
}

@ARTICLE{brent1982,
  author={Brent and Kung},
  journal={IEEE Transactions on Computers}, 
  title={A Regular Layout for Parallel Adders}, 
  year={1982},
  volume={C-31},
  number={3},
  pages={260-264},
  keywords={Computers;Computational modeling;Adders;Layout;Logic gates;Wires;Very large scale integration;Performance evaluation;Costs;Computer science;Addition;area-time complexity;carry lookahead;circuit design;combinational logic;models of computation;parallel addition;parallel polynomial evaluation;prefix computation;VLSI},
  doi={10.1109/TC.1982.1675982}}

@inproceedings{Hastad1986, author = {H{\aa}stad, J}, title = {Almost optimal lower bounds for small depth circuits}, year = {1986}, isbn = {0897911938}, publisher = {Association for Computing Machinery}, address = {New York, NY, USA}, url = {https://doi.org/10.1145/12130.12132}, doi = {10.1145/12130.12132}, booktitle = {Proceedings of the Eighteenth Annual ACM Symposium on Theory of Computing}, pages = {6–20}, numpages = {15}, location = {Berkeley, California, USA}, series = {STOC '86} }

@article{FurstSaxeSipser1984,
  author  = {Merrick Furst and James B. Saxe and Michael Sipser},
  title   = {Parity, circuits, and the polynomial-time hierarchy},
  journal = {Mathematical Systems Theory},
  volume  = {17},
  number  = {1},
  pages   = {13--27},
  year    = {1984}
}

@article{merrill2023parallelism,
    title = "The Parallelism Tradeoff: Limitations of Log-Precision Transformers",
    author = "Merrill, William  and
      Sabharwal, Ashish",
    journal = "Transactions of the Association for Computational Linguistics",
    volume = "11",
    year = "2023",
    address = "Cambridge, MA",
    publisher = "MIT Press",
    url = "https://aclanthology.org/2023.tacl-1.31/",
    doi = "10.1162/tacl_a_00562",
    pages = "531--545",
    abstract = "Despite their omnipresence in modern NLP, characterizing the computational power of transformer neural nets remains an interesting open question. We prove that transformers whose arithmetic precision is logarithmic in the number of input tokens (and whose feedforward nets are computable using space linear in their input) can be simulated by constant-depth logspace-uniform threshold circuits. This provides insight on the power of transformers using known results in complexity theory. For example, if L{\ensuremath{\neq}}P (i.e., not all poly-time problems can be solved using logarithmic space), then transformers cannot even accurately solve linear equalities or check membership in an arbitrary context-free grammar with empty productions. Our result intuitively emerges from the transformer architecture{'}s high parallelizability. We thus speculatively introduce the idea of a fundamental parallelism tradeoff: any model architecture as parallelizable as the transformer will obey limitations similar to it. Since parallelism is key to training models at massive scale, this suggests a potential inherent weakness of the scaling paradigm."
}

@book{vollmer1999introduction,
  title={Introduction to Circuit Complexity: A Uniform Approach},
  author={Vollmer, H.},
  isbn={9783540643104},
  lccn={lc99033301},
  series={Texts in Theoretical Computer Science. An EATCS Series},
  url={https://books.google.com/books?id=55ZTgOJs8bsC},
  year={1999},
  publisher={Springer Berlin Heidelberg}
}

@article{hao-etal-2022-formal,
    title = "Formal Language Recognition by Hard Attention Transformers: Perspectives from Circuit Complexity",
    author = "Hao, Yiding  and
      Angluin, Dana  and
      Frank, Robert",
    editor = "Roark, Brian  and
      Nenkova, Ani",
    journal = "Transactions of the Association for Computational Linguistics",
    volume = "10",
    year = "2022",
    address = "Cambridge, MA",
    publisher = "MIT Press",
    url = "https://aclanthology.org/2022.tacl-1.46/",
    doi = "10.1162/tacl_a_00490",
    pages = "800--810",
    abstract = "This paper analyzes three formal models of Transformer encoders that differ in the form of their self-attention mechanism: unique hard attention (UHAT); generalized unique hard attention (GUHAT), which generalizes UHAT; and averaging hard attention (AHAT). We show that UHAT and GUHAT Transformers, viewed as string acceptors, can only recognize formal languages in the complexity class AC0, the class of languages recognizable by families of Boolean circuits of constant depth and polynomial size. This upper bound subsumes Hahn{'}s (2020) results that GUHAT cannot recognize the DYCK languages or the PARITY language, since those languages are outside AC0 (Furst et al., 1984). In contrast, the non-AC0 languages MAJORITY and DYCK-1 are recognizable by AHAT networks, implying that AHAT can recognize languages that UHAT and GUHAT cannot."
}
\bibliographystyle{plainnat}

\end{document}